\documentclass{article}[12pt]

\usepackage[margin=1.1in]{geometry}

\usepackage[utf8]{inputenc} 
\usepackage[T1]{fontenc}    
\usepackage{hyperref}       
\usepackage{url}            
\usepackage{booktabs}       
\usepackage{amsfonts}       
\usepackage{nicefrac}       
\usepackage{microtype}      
\usepackage{xcolor}         
\usepackage{amsthm,amssymb,amsmath}
\usepackage{graphicx}
\usepackage{subcaption}
\usepackage{wrapfig}

\usepackage{natbib}

\usepackage{xcolor}

\usepackage{enumitem}

\theoremstyle{plain}
\newtheorem{theorem}{Theorem}[section]
\newtheorem{proposition}[theorem]{Proposition}
\newtheorem{lemma}[theorem]{Lemma}
\newtheorem{corollary}[theorem]{Corollary}
\theoremstyle{definition}
\newtheorem{definition}[theorem]{Definition}
\newtheorem{assumption}[theorem]{Assumption}
\newtheorem{condition}[theorem]{Condition}
\theoremstyle{remark}
\newtheorem{remark}[theorem]{Remark}

\usepackage{amsmath,amsfonts,bm}
\usepackage{mathtools}

\def\eqref#1{equation~\ref{#1}}

\def\1{\bm{1}}

\def\rva{{\mathbf{a}}}
\def\rvb{{\mathbf{b}}}

\def\rvf{{\mathbf{f}}}

\def\rvr{{\mathbf{r}}}

\def\rvv{{\mathbf{v}}}
\def\rvw{{\mathbf{w}}}
\def\rvx{{\mathbf{x}}}
\def\rvy{{\mathbf{y}}}

\DeclareMathAlphabet{\mathsfit}{\encodingdefault}{\sfdefault}{m}{sl}
\SetMathAlphabet{\mathsfit}{bold}{\encodingdefault}{\sfdefault}{bx}{n}

\title{Label-NTK Alignments and A Tighter Convergence Bound in the NTK Regime}

\author{%
  Ruchirinkil Marreddy ~~~~~~~~~~~~~~~ Chaoyue Liu 
    \\
    \\
  Elmore Family School of Electrical and Computer Engineering\\
  Purdue University\\
  \texttt{\{rmarred, cyliu\}@purdue.edu} \\ 
}

\begin{document}

\maketitle

\begin{abstract}
The Neural Tangent Kernel (NTK) framework explains optimization in over-parameterized neural networks via approximately linearized dynamics, yielding exponential convergence guarantees. However, existing results are often overly pessimistic and do not match the fast training in practice, as they depend on the smallest NTK eigenvalue, which is typically extremely small in practice.
In this work, we develop sharper convergence guarantees by characterizing the interaction between data labels and the NTK eigen-spectrum. We identify two key phenomena, Label-NTK alignment and Residual-NTK alignment, showing that projections of labels and residuals onto NTK eigenvectors scale with the corresponding eigenvalues. We provide empirical evidence and theoretical justification under mild data assumptions.
Exploiting these alignment properties, we derive a refined convergence bound that depends on the full spectrum and closely matches practical training dynamics, significantly improving over classical worst-case results. We further obtain improved generalization bounds. Experiments on MLPs and CNNs across multiple datasets validate our theory.
\end{abstract}

\section{Introduction}

Understanding why deep neural networks can be efficiently optimized by simple algorithms such as gradient descent, despite their highly non-convex objectives, remains a central question in modern machine learning. In recent years, a prominent line of work has made significant progress by studying over-parameterized neural networks through the lens of the Neural Tangent Kernel (NTK) \cite{jacot2018ntk}. In the regime of sufficient over-parameterization, neural networks exhibit approximately linear training dynamics \cite{lee2019wide, liu2020linearity}, which has enabled a rich body of literature establishing that (stochastic) gradient descent converges to a global minimum at exponential rates \citep{dugradient, du2019gradient, allen2019convergence, zou2020gradient, oymak2020toward, liu2022loss, liu2023aiming}. Technically, for the mean squared error (MSE) loss, a sufficiently wide neural network trained with a constant step size $\eta$ achieves a convergence rate of the form $O(\exp(-O(\eta \lambda_{\min} t)))$ where $\lambda_{\min}$ denotes the smallest eigenvalue of the NTK matrix $K$ associated with the corresponding infinitely wide network.

\begin{figure}
    \centering
    \includegraphics[width=0.6\linewidth]{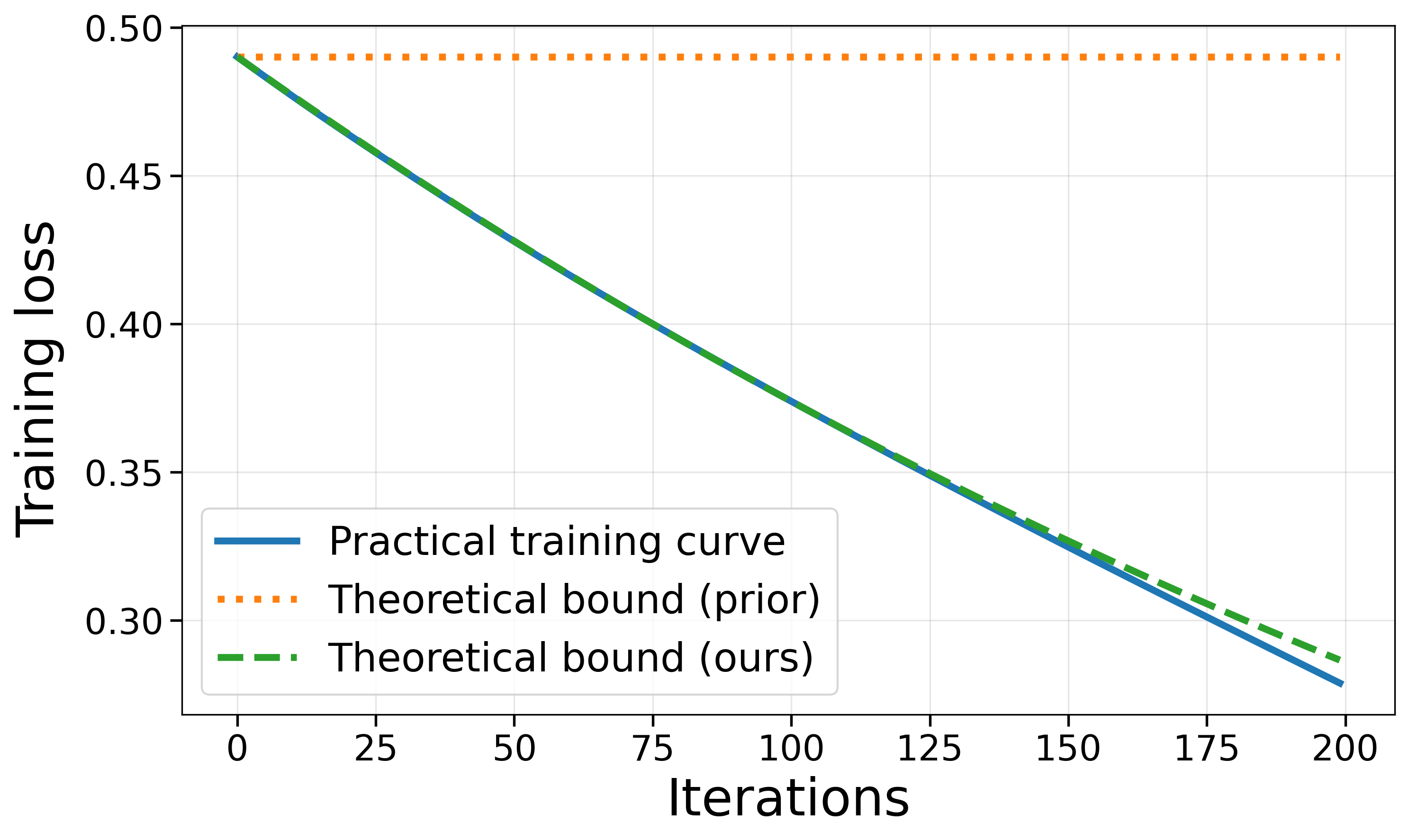}
    \caption{Comparison of theoretical convergence bounds.
      {\bf Solid line}: the actual training curve of a five-layer CNN model trained on a subset of 
    CIFAR-10 \cite{Krizhevsky09learningmultiple} with $1000$ samples under gradient descent. {\bf Orange dotted line}: Theoretical convergence bound $\exp(-\eta\lambda_{\min}t)\mathcal{L}(\rvw_0)$ by prior theories,  slow convergence due to the extremely small value of $\lambda_{\min}$. {\bf Green dashed line}: Theoretical convergence bound $\mathrm{tr}[(I-\eta K)^{2t}K]$ by our theory, fast convergence, and matching the actual training curves.}
    \label{fig:training_curve_intro}
\end{figure}
Despite their elegance and insight in demystifying aspects of deep learning, these optimization theories still fall short of explaining practical behavior.
In particular, their predicted convergence behavior is often overly pessimistic and fails to match empirical observations. This discrepancy manifests in two main aspects. First, the theoretical rates depend critically on the smallest eigenvalue $\lambda_{\min}$, which is typically extremely small in realistic settings, leading to convergence guarantees that are far slower than those observed in practice. For example, as demonstrated in Figure~\ref{fig:training_curve_intro}, its theoretical curve remains nearly flat due to the tiny value of $\lambda_{\min}$ (which is typically even smaller for larger datasets), whereas the empirical training curve decreases much more rapidly. Second, in the NTK regime, features remain essentially fixed during training; as a result, these theories do not capture the rich feature learning dynamics widely believed to drive the success of deep networks. Notably, phenomena such as the edge of stability (EoS) \cite{cohen2021gradient} and catapult dynamics \cite{lewkowycz2020large}, which often arise when using large learning rates, exhibit nonlinear and transient behaviors that lie beyond the predictive scope of the NTK framework.

In this paper, we focus on the NTK regime and aim to derive tighter convergence guarantees. Several prior works have attempted to do this by analyzing the full spectrum of the NTK (rather than only the smallest eigenvalue) under idealized data settings. In particular, assuming the data are uniformly distributed on the unit sphere $\mathbb{S}^d$, a series of works \cite{basri2019convergence,geifman2020similarity,murray2022characterizing} has shown that the eigenvalues of the NTK decay according to a power law and has found that the eigenvectors are spherical harmonic functions, which admit explicit mathematical expressions. Based on this, \cite{basri2019convergence} improved the worst-case convergence rate for a two-layer network with a uniform data distribution on $\mathbb{S}^d$. Additionally, \cite{arora2019finegrainedanalysisoptimizationgeneralization} expresses the convergence bound in terms of the full spectrum and labels the components along each eigen-direction. With the empirical observation that the top few eigenvalues have larger label components, this indicates that the convergence bound can be strongly influenced by the top eigenvalues.

In this paper, we investigate the following central question:

\smallskip
\smallskip
{\it Is it possible to obtain a convergence bound that matches the practical training in the NTK regime? }
\smallskip
\smallskip

Indeed, we have a positive answer. Our approach is based on a key empirical and theoretical finding regarding the relationships between data labels and the NTK eigen-spectrum. Intuitively, these relationships show that the worst-case scenario, on which classical NTK-based optimization theories rely and from which the dependence on $\lambda_{\min}$ arises, does not occur in practice. 
Instead, we find, for real-world datasets, the label vector $\rvy$ is more strongly aligned with eigenvectors corresponding to larger eigenvalues and exhibits weak (often near-orthogonal) alignment with those associated with smaller eigenvalues.
Quantitatively, we uncover the following relationships, which hold for the majority of the NTK spectrum (excluding the top few eigenvalues):

\begin{equation}\label{eq:alignment_relation_intro}
    (\rvv_i^\top \rvy)^2 \sim \lambda_i^2, ~~\mathrm{and}~~ (\rvv_i^\top \rvr)^2 \sim \lambda_i, ~~~~ \forall i\in [n],
\end{equation}
where $\lambda_i$ and $\rvv_i$ are NTK eigenvalues and eigenvectors, respectively, and $\rvr = \rvf-\rvy$ is the residual between model predictions $\rvf\in\mathbb{R}^n$ and the label vector $\rvy$. 
We refer to them as {\it Label-NTK alignment} and {\it Residual-NTK alignment}, respectively.
This is empirically demonstrated in Figure \ref{fig:cifar10_cnn_projections_vs_eigenspectrum}, which shows linear dependence between $\log (\rvv_i^\top \rvy)^2$ and $\log \lambda_i$, and between $\log (\rvv_i^\top \rvr)^2$ and $\log \lambda_i$. Note that the linear relations have a slope $2$ and a slope $1$, respectively, consistent with Eq. (\ref{eq:alignment_relation_intro}). Moreover, we observe that these alignments persist throughout the entire training process. We further provide a theoretical justification of the Label-NTK alignment under mild assumptions satisfied by real-world datasets. Notably, our theory precisely predicts the exponent $2$ in the Label-NTK alignment. Intuitively, this phenomenon arises from the principle that similar inputs tend to have similar labels, which can be mathematically formalized via Lipschitz continuity of the ground-truth data-label function.

\begin{figure}[t]
    \centering
    \begin{subfigure}[t]{0.49\linewidth}
        \centering
        \includegraphics[width=\linewidth]{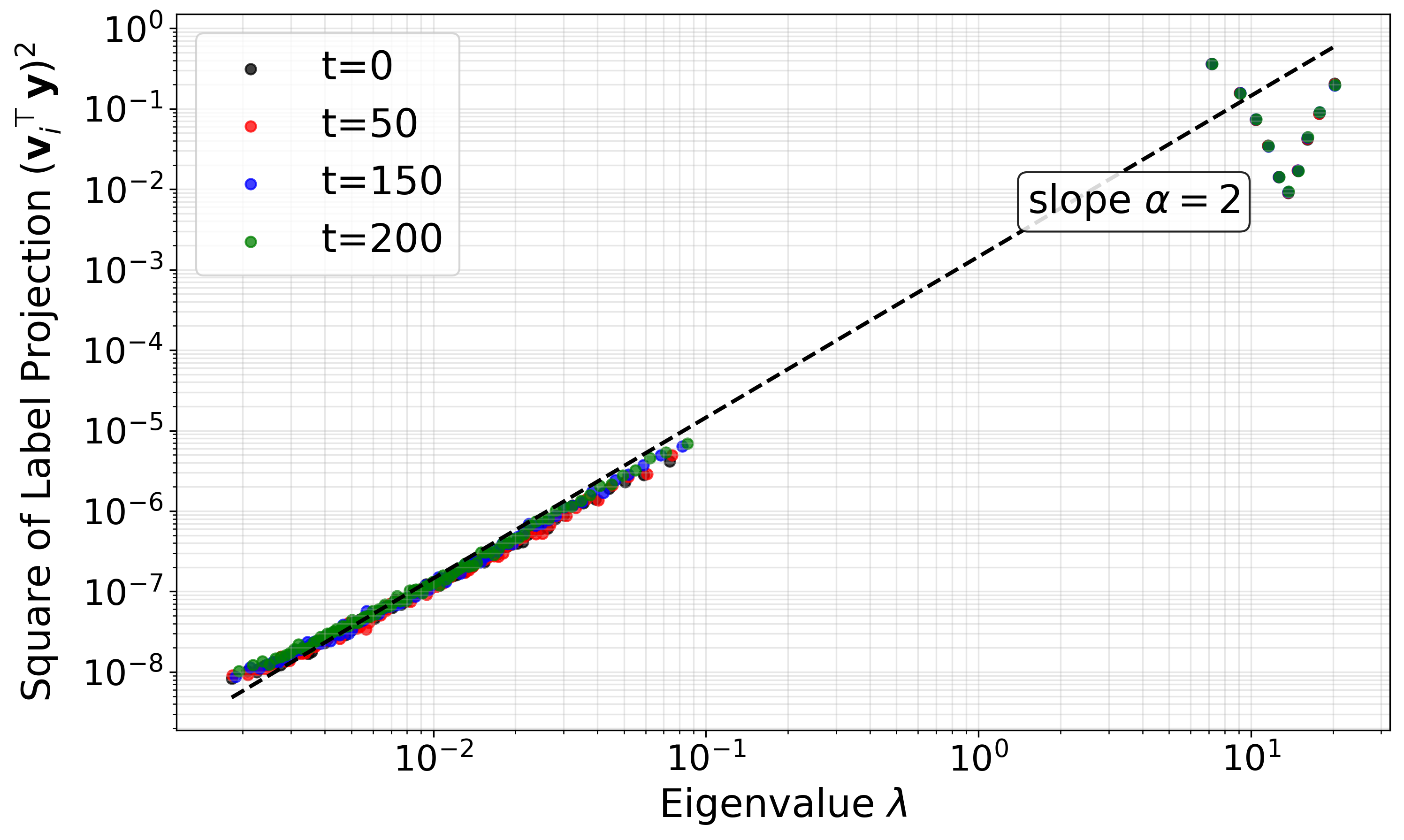}
        \caption{Label-NTK alignment}
        \label{fig:cifar10_y_projections_vs_eigenspectrum_cnn}
    \end{subfigure}
    \hfill
    \begin{subfigure}[t]{0.49\linewidth}
        \centering
        \includegraphics[width=\linewidth]{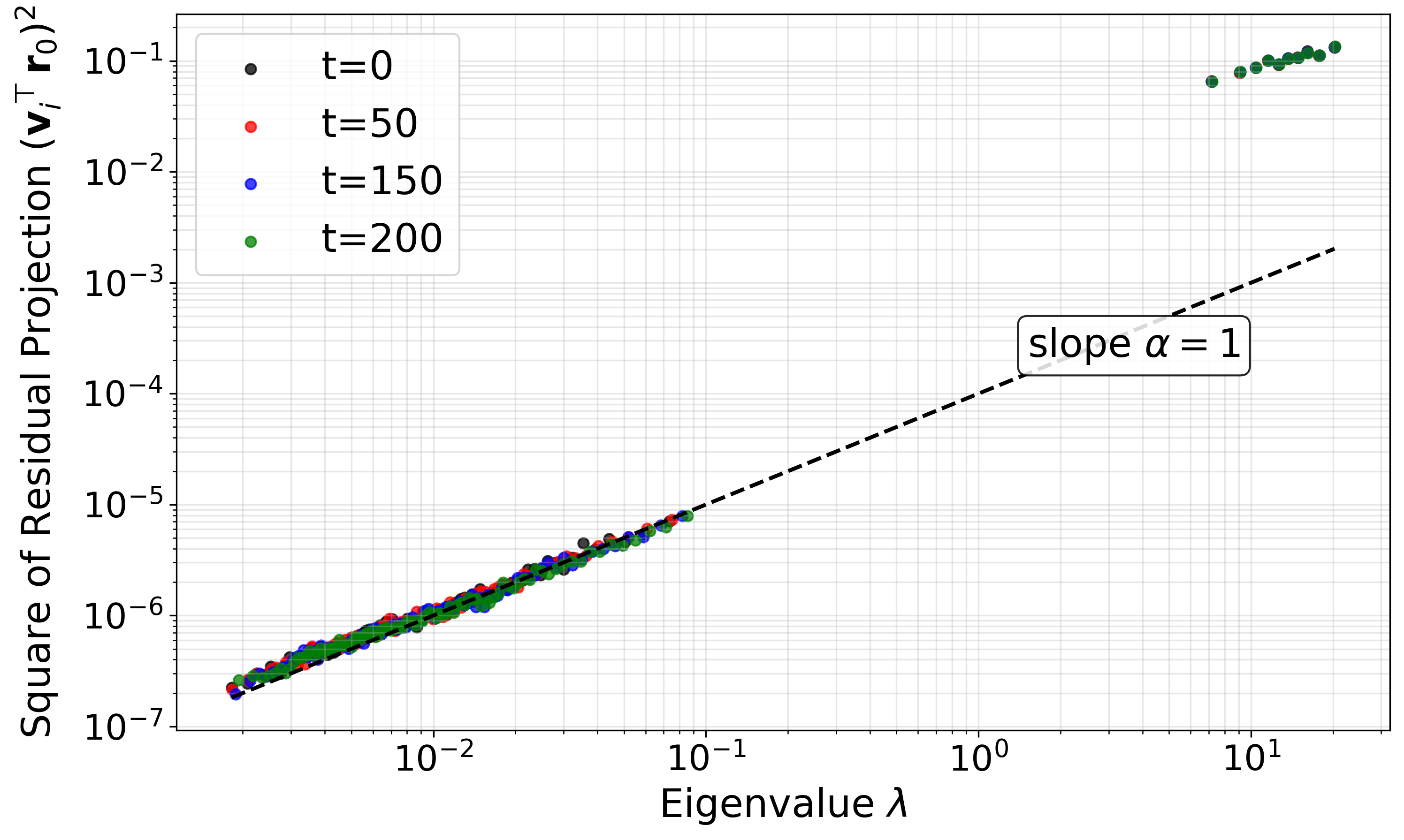}
        \caption{Residual-NTK alignment}
\label{fig:cifar10_r0_projections_vs_eigenspectrum_cnn}
    \end{subfigure}
    \caption{{\it Label-NTK alignment} and {\it Residual-NTK alignment}, of a five-layer CNN (width: $32,64,128,128,256$) trained on the whole CIFAR-10 dataset at different training time $t$. Each point is computed as an average over 500 batches, each containing 100 samples. Note: slope $\alpha=2$ (left) and $\alpha=1$ (right).} \label{fig:cifar10_cnn_projections_vs_eigenspectrum}
\end{figure}

These NTK alignment properties enable us to circumvent explicit dependence on the smallest eigenvalue $\lambda_{\min}$. Leveraging this insight, we establish the following sharper convergence result in the NTK regime:
\begin{theorem}[Informal version of Theorem \ref{thm:loss_bounds}]
Assume the Residual-NTK alignment and other mild data assumptions hold. 
For sufficiently wide networks, gradient descent with constant step size $\eta$ on the squared loss $\mathcal{L}(\rvw_t)$ satisfies:
\begin{equation}
    \mathcal{L}(\rvw_t) \sim \mathrm{tr}[(I-\eta K)^{2t}K].
\end{equation}
\end{theorem}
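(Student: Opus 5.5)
The plan is to reduce the nonlinear gradient descent dynamics to the linearized NTK dynamics, and then evaluate the linearized loss in the eigenbasis of $K$ using the Residual-NTK alignment.

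\textbf{Step 1 (linearized recursion).} In the NTK regime the tangent kernel $K_t$ stays within $O(1/\sqrt{m})$ of $K$ along the trajectory, where $m$ is the width. The gradient descent update $\rvw_{t+1}=\rvw_t-\eta\nabla\mathcal{L}(\rvw_t)$, with $\mathcal{L}=\frac12\|\rvr\|^2$, then gives the residual recursion
\begin{equation*}
\rvr_{t+1}=(I-\eta K)\rvr_t+\epsilon_t,
\end{equation*}
where $\|\epsilon_t\|\le C\,\eta\, m^{-1/2}\|\rvr_t\|$ for some constant $C$. This error collects two contributions: the deviation $K_t-K$, and the Hessian (second-order Taylor) term. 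The latter is small because the Hessian norm is $O(1/\sqrt{m})$, by the transition-to-linearity results \cite{liu2020linearity}.

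\textbf{Step 2 (eigenbasis evaluation).} Unrolling the recursion gives
\begin{equation*}
\rvr_t=(I-\eta K)^t\rvr_0+\sum_{s<t}(I-\eta K)^{t-1-s}\epsilon_s.
\end{equation*}
With $\eta<1/\lambda_{\max}$, the leading term satisfies
\begin{equation*}
\|(I-\eta K)^t\rvr_0\|^2=\sum_i(1-\eta\lambda_i)^{2t}(\rvv_i^\top\rvr_0)^2 .
\end{equation*}
Now invoke the Residual-NTK alignment $(\rvv_i^\top\rvr_0)^2\asymp c\lambda_i$, with two-sided constants holding on all but the top few eigen-directions. The top directions decay at rate at least $(1-\eta\lambda_k)^{2t}$, so they are absorbed into constants, or handled by the assumption that their weight is comparable. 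This yields
\begin{equation*}
\|(I-\eta K)^t\rvr_0\|^2\asymp \mathrm{tr}[(I-\eta K)^{2t}K],
\end{equation*}
which gives both the upper and lower bounds for the linear part.

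\textbf{Step 3 (controlling the perturbation).} This is the main obstacle. A naive bound on $\sum_s\|(I-\eta K)^{t-1-s}\|\|\epsilon_s\|$ uses operator norms $\le1$ and accumulates like $t\,\eta\,m^{-1/2}\sup_s\|\rvr_s\|$. That error does not decay, whereas $\mathrm{tr}[(I-\eta K)^{2t}K]$ does. I would first try to choose the width depending on the horizon: for $t\le T$, take $m$ large enough that $T\eta m^{-1/2}\|\rvr_0\|^2$ is at most a small fraction of $\mathrm{tr}[(I-\eta K)^{2T}K]$. Since the trace is decreasing in $t$, this fraction bound holds for every $t\le T$.

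A sharper alternative is to apply the alignment assumption to $\rvr_t$ at every $t$, which the paper observes empirically to persist during training. One then writes $\mathcal{L}(\rvw_{t+1})\le\mathcal{L}(\rvw_t)-\eta\rvr_t^\top K\rvr_t(1-O(\eta\lambda_{\max}))+\dots$ and uses the alignment to relate $\rvr_t^\top K\rvr_t$ to the trace expression. The corrections are then relative, of order $(1\pm Cm^{-1/2})^t$, and are harmless for $t\lesssim\sqrt{m}$.

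\textbf{Step 4 (conclusion).} Combining the two-sided linear estimate with the perturbation control gives $\mathcal{L}(\rvw_t)\asymp\mathrm{tr}[(I-\eta K)^{2t}K]$ for sufficiently wide networks. The constants depend only on the alignment constants and on the step-size condition $\eta\lambda_{\max}<1$.
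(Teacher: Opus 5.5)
Your skeleton matches the paper's. You linearize, unroll the residual recursion, and evaluate $\|(I-\eta K)^t\rvr_0\|^2=\sum_i(1-\eta\lambda_i)^{2t}(\rvv_i^\top\rvr_0)^2$ using the alignment only at $t=0$. One caveat on Step~2: the paper assumes Condition~\ref{assum:residual_ntk_alignment} for \emph{all} $i\in[n]$. ``Absorbing'' the top directions into constants works for the upper bound but not the lower one, because the top eigenvalues can dominate $\mathrm{tr}[K]$ at small $t$.

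The real divergence is Step~3. There you miss what makes the paper's estimate uniform in time: $K$ is positive definite, so $\|(I-\eta K)^{t-1-\tau}\|_{op}\le(1-\eta\lambda_{\min})^{t-1-\tau}$, not merely $\le 1$. The accumulated error is then a convergent geometric series, with $\|\Delta_t\|\le(\eta\lambda_{\min})^{-1}\sup_\tau\|\epsilon_\tau\|$ for every $t$. The width $m=\tilde\Omega(nR^{6L+2}/(\lambda_{\min}^2\varepsilon^2))$ is chosen so that each per-step error is at most $\frac{\eta\lambda_{\min}\varepsilon}{4\mathcal{L}(\rvw_0)}\|\rvr_\tau\|$. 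This cancels the factor $1/(\eta\lambda_{\min})$ and gives $\|\Delta_t\|\le\varepsilon/(4\sqrt{\mathcal{L}(\rvw_0)})$ for all $t$. The cross term and $\|\Delta_t\|^2$ then contribute an \emph{additive} $\pm\varepsilon/2$; the paper never tries to make the error small relative to the decaying trace. This costs nothing extra. Your Step~1 claim that $K_t$ stays $O(m^{-1/2})$-close to $K$ along the whole trajectory is itself the result of \cite{liu2022loss}: the iterates stay in $B_R(\rvw_0)$ with $R\propto 1/\lambda_{\min}$. So $\lambda_{\min}$ already sits in the width, and using it once more keeps it out of the rate.

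Your first option is valid but strictly weaker. It gives a multiplicative $\asymp$ only on a finite horizon $t\le T$, with width depending on $T$. Since $\mathrm{tr}[(I-\eta K)^{2T}K]\le\mathrm{tr}[K](1-\eta\lambda_{\min})^{2T}$, the requirement $T\eta m^{-1/2}\|\rvr_0\|^2\ll\mathrm{tr}[(I-\eta K)^{2T}K]$ eventually forces $m$ to grow exponentially in $T$. The paper's additive version holds for all $t$ with one polynomial width. It is informative exactly while the trace term exceeds $\varepsilon$, which is all that Theorem~\ref{thm:time_complexity} needs.

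Your ``sharper alternative'' should be dropped, for two reasons. First, it assumes alignment along the whole trajectory, which the paper deliberately avoids. Second, it cannot produce the trace formula. If $(\rvv_i^\top\rvr_t)^2\approx c_t\lambda_i$ held at every $t$, then $\rvr_t^\top K\rvr_t/\|\rvr_t\|^2\approx\mathrm{tr}[K^2]/\mathrm{tr}[K]$. Your one-step recursion would then give a single exponential rate, not the mixture $\sum_i(1-\eta\lambda_i)^{2t}\lambda_i$. In fact the linear dynamics turn the projections into $(1-\eta\lambda_i)^{2t}(\rvv_i^\top\rvr_0)^2$. Exact persistence of the shape $\lambda_i$ therefore contradicts the conclusion you want; it holds only approximately, for the small eigenvalues with $\eta\lambda_i t\ll 1$.
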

As demonstrated in Figure~\ref{fig:training_curve_intro}, this bound provides a significantly tighter characterization than prior results and closely matches empirical training curves.

In addition, leveraging the Label-NTK alignment, we derive improved generalization bounds for over-parameterized neural networks compared to prior work.

\noindent{\bf Our contributions.} We summarize our main contributions as follows:\begin{itemize}[leftmargin=*]
\item We find close correlations between the target (i.e., label vector $\rvy$ and residual $\rvr=\rvf-\rvy$) and the NTK eigen-system, particularly in the small-eigenvalue regime. We also provide a theoretical justification for this finding with mild data assumptions.
\item We establish a tighter convergence theory in the NTK regime by incorporating the Residual-NTK alignment. The new convergence bound closely matches the empirical training curves.
\item We provide extensive empirical validation across deep MLP and CNN architectures on multiple datasets, demonstrating strong agreement between theory and observed training dynamics.
\item We derive improved generalization bounds for over-parameterized neural networks based on Label-NTK alignment.
\end{itemize}

\subsection{Related works}

\textbf{Optimization Theories in the NTK regime:}
Neural network optimization has been widely studied using the NTK and related linearization frameworks \cite{jacot2018ntk,lee2019wide,liu2020linearity,liu2022transition,zhu2022transition}. In the infinite width limit, gradient-based training can be characterized as kernel gradient descent with a deterministic kernel fixed at initialization \cite{jacot2018ntk,lee2019wide,chizat2019lazy,geiger2020disentangling}. This view has led to a broad class of convergence results for sufficiently over-parameterized neural networks \cite{dugradient,du2019gradient,allen2019convergence,zou2019improved,zou2020gradient,oymak2020toward,nguyen2020global,ji2020polylogarithmic}. Most such analyses rely on the smallest positive eigenvalue $\lambda_{\min}$ of the NTK in deriving their worse-case convergence rates \cite{jacot2018ntk,dugradient,du2019gradient,lee2019wide}.
While foundational, these bounds are often too loose to precisely describe the optimization behavior exhibited in practice due to their dependence on $\lambda_{\min}$. 

A line of work attempted to improve classical NTK theory by moving beyond $\lambda_{\min}$ and incorporating full-spectral information. Under idealized spherical data models, \cite{basri2019convergence,geifman2020similarity,murray2022characterizing} analyzed the full NTK spectrum, showing structured eigenvalue decay and spherical-harmonic eigenfunctions. Using this structure, \cite{basri2019convergence} derived faster convergence guarantees for two-layer networks. In a different but related direction, \cite{arora2019finegrainedanalysisoptimizationgeneralization} expressed optimization bounds for over-parameterized two-layer networks in terms of the full spectrum and label components along eigen-directions.  \cite{bordelon2020spectrum,canatar2021spectral} emphasized that learning depends on full-kernel spectrum and target distribution across eigen-modes.

\textbf{NTK Alignment:} 
Prior work has noted a connection between the label vector $\rvy$ and the NTK, albeit in a limited sense.  \cite{arora2019finegrainedanalysisoptimizationgeneralization} empirically observed that $\rvy$ exhibit stronger alignment with the leading eigenvalues than the rest, but did not investigate the alignment among the non-leading eigenvalues and did not provide a quantitative characterization of this relationship. There is a line of work that investigates the behavior of an alignment quantity, $\rvy^\top K\rvy$, or its normalized variants, over training time, especially in the cases of small initialization scale or large learning rates where the learning dynamics are beyond the NTK regime \cite{atanasov2022silent,Shan2021ATO,jiang2026understanding,geiger2020disentangling}. 
With small initialization of the network, \cite{atanasov2022silent} observed two phases of the learning dynamics, with the first phase having an increasing alignment $\frac{\rvy^\top K \rvy}{\|\rvy\|_2^2 \, \|K\|_F}$, while \cite{Shan2021ATO} theoretically explained this alignment on shallow networks. 
A recent work \cite{jiang2026understanding} tracks the individual eigenvector-target alignment during training and showed that this alignment also increases over training time and shifts toward leading eigen-directions under larger learning rates. While these works investigate the alignment change in training time,  
our Label-NTK alignment differs in that it describes the difference of the alignment across different eigen-components at each training step.

\section{Background}
\label{preliminaries_notation}
Given a neural network $f: \mathcal{X}\subseteq\mathbb{R}^d \mapsto \mathbb{R}$ with parameters $\rvw\in\mathbb{R}^p$ and a training dataset $\mathcal{D}=\{(\rvx_i,y_i)\}_{i=1}^n$  where $ \rvx_i\in\mathcal{X},\; y_i\in\mathbb{R}$, we consider the optimization problem of minimizing the square loss function $\mathcal{L}(\rvw)$:

\begin{equation}\label{eq:opt_pro}
    \rvw^*  = \arg\min_{\rvw}\mathcal{L}(\rvw), ~~~ \mathrm{with}~~~\mathcal{L}(\rvw) =  \frac{1}{2}\sum_{i=1}^n\big(f_\rvw(\rvx_i)-y_i\big)^2 \triangleq \frac{1}{2}\|\rvf(\rvw) - \rvy\|^2,
\end{equation}
where we use $\rvf$ and $\rvy$ to denote the vectors of model outputs and labels on the dataset: $\rvf(\rvw) \triangleq (f_\rvw(\rvx_1), f_\rvw(\rvx_2), \cdots, f_\rvw(\rvx_n))^\top$ and $\rvy \triangleq (y_1, y_2, \cdots, y_n)^\top$. To emphasize the dependence of the network output on the parameters $\rvw$, we also denote $f_\rvw(\rvx)$ as $f(\rvw;\rvx)$.

We optimize the loss function using the gradient descent (GD) algorithm, which updates the network parameters $\rvw$ following.
$\rvw_{t+1}=\rvw_t-\eta\nabla_\rvw \mathcal{L}(\rvw_t)$, where $t$ demotes the time step, $\eta>0$ is the learning rate. 

In this paper, we focus on over-parameterized neural networks, where the number of network parameters $p$ is much greater than the training dataset size $n$: $p\gg n$. In the over-parameterized settings, neural networks can often exactly fit the training data, a.k.a. interpolation, and achieve zero training loss: there exists $\rvw^*$ such that $\mathcal{L}(\rvw^*)=0$. The over-parameterization is often achieved by letting the network width $m$, i.e., the number of neurons in each hidden layer, be sufficiently large. Indeed, most existing optimization theories of neural networks consider the sufficiently large width setting, where a nice mathematical property -- constant NTK -- has been found and is set as the base of these theories.  

\textbf{Neural Tangent Kernel (NTK).} Given two arbitrary inputs $\rvx_1$ and $\rvx_2$, the NTK $\mathcal{K}$ is defined as 
\begin{equation}
    \mathcal{K}(\rvw;\rvx_1,\rvx_2) = \langle \nabla_\rvw f(\rvw;\rvx_1), \nabla_\rvw f(\rvw;\rvx_2) \rangle,
\end{equation}
where $\nabla_\rvw f$ denotes the derivative of network output w.r.t. parameters $\rvw$. Given the dataset $\mathcal{D}$, there is a Jacobian matrix $J(\rvw)\in\mathbb{R}^{n\times p}$, each row $J_i(\rvw)$ of which is the vector $\nabla_\rvw f_\rvw(\rvx_i)$, and an NTK matrix $K(\rvw)\in\mathbb{R}^{n\times n}$ such that
\begin{equation*}
    K(\rvw) = J(\rvw)J(\rvw)^\top, ~~ \mathrm{and}~~ K_{ij}(\rvw) = \mathcal{K}(\rvw;\rvx_i,\rvx_j).
\end{equation*}

The key theoretical finding is that, in the infinite width limit, the NTK matrix remains unchanged during the network training process, $K(\rvw_t) = K$ for all $t\ge 0$ \cite{jacot2018ntk}, or even in a large region, such as a ball, in the parameter space that contains the training path \cite{liu2022loss}.
Based on this property, previous work shows that the training dynamics of a neural network with finite but sufficiently large width $m$ is approximately linear \cite{lee2019wide,liu2020linearity}:
\begin{equation}\label{eq:f-y_existing}
    \rvf(\rvw_t) - \rvy \approx (I-\eta K)^t (\rvf(\rvw_0)-\rvy),
\end{equation}
and then the loss function can be controlled by an exponentially decreasing sequence; hence, it enjoys a fast convergence rate under gradient descent \cite{allen2019convergence,du2019gradient}. For example:  
\begin{proposition}[Informal version of Theorem 8 of \cite{liu2022loss}]\label{thm:prior_opt}
Let $\lambda_{\min}$ be the smallest eigenvalue of the infinite-width NTK $K$. If the network width satisfies $m=\Omega(\lambda_{\min}^{-2})$, then with a constant step size $\eta = O(1)$, gradient descent converges to a global minimizer with the following rate:
\begin{equation}\label{eq:slow_rate}
    \mathcal{L}(\rvw_t)\le (1-\eta\lambda_{\min})^{t}\mathcal{L}(\rvw_0).
\end{equation}
\end{proposition}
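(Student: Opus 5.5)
We prove the bound in Proposition~\ref{thm:prior_opt} with the standard NTK-regime argument. Three pieces come together: a PL-type inequality driven by the smallest NTK eigenvalue, a smoothness bound on the loss, and a control on how much the finite-width NTK $K(\rvw)$ deviates from the limiting $K$ in a ball around $\rvw_0$.

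\textbf{Step 1 (near-constancy of the NTK).} We fix a radius $R$ and work in the ball $B(\rvw_0,R)$. For a network of width $m$, the Hessian of each output has spectral norm $\|\nabla^2_\rvw f(\rvw;\rvx_i)\| = \tilde O(1/\sqrt m)$ uniformly on this ball. This is the transition-to-linearity estimate of \cite{liu2020linearity}, which we invoke rather than reprove. Combined with concentration of $K(\rvw_0)$ around $K$, it gives
\[
\|K(\rvw)-K\| \le \tilde O\!\left(\tfrac{R+1}{\sqrt m}\right).
\]
Choosing $m=\Omega(\lambda_{\min}^{-2})$ (up to the dependence on $R$, $n$ and logarithmic factors) makes this at most $\lambda_{\min}/2$. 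Hence $\lambda_{\min}(K(\rvw))\ge \lambda_{\min}/2$ on the whole ball.

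\textbf{Step 2 (PL condition and descent).} We have $\|\nabla\mathcal{L}(\rvw)\|^2 = \rvr^\top K(\rvw)\rvr \ge \lambda_{\min}(K(\rvw))\|\rvr\|^2$, so $\tfrac12\|\nabla \mathcal{L}\|^2 \ge \mu \mathcal{L}$ on the ball with $\mu$ proportional to $\lambda_{\min}$. The loss is also $\beta$-smooth on the ball, with $\beta = O(\|J\|^2 + \|\rvr\|\,\|\nabla^2 f\|) = O(1)$ in the standard NTK parameterization. Taking $\eta\le 1/\beta$, the descent lemma gives
\[
\mathcal{L}(\rvw_{t+1}) \le \mathcal{L}(\rvw_t) - \tfrac{\eta}{2}\|\nabla\mathcal{L}(\rvw_t)\|^2 \le (1-\eta\mu)\,\mathcal{L}(\rvw_t).
\]
The constant in $\lambda_{\min}$ can be absorbed because the statement is informal; tightening Step~1 so that $\lambda_{\min}(K(\rvw))\ge(1-\epsilon)\lambda_{\min}$ recovers the stated factor.

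\textbf{Step 3 (iterates remain in the ball).} This is the main obstacle, since the argument above is circular without it. We proceed by induction on $t$. While the iterates lie in the ball,
\[
\|\rvw_{t+1}-\rvw_t\| = \eta\|\nabla \mathcal{L}(\rvw_t)\| \le \eta\sqrt{2\beta\,\mathcal{L}(\rvw_t)},
\]
and $\mathcal{L}(\rvw_t)$ decays geometrically. Summing the resulting geometric series gives
\[
\sum_t \|\rvw_{t+1}-\rvw_t\| \le \frac{C\sqrt{\beta\,\mathcal{L}(\rvw_0)}}{\mu} = O\!\left(\frac{\sqrt{\mathcal{L}(\rvw_0)}}{\lambda_{\min}}\right).
\]
We set $R$ equal to this quantity, with $\mathcal{L}(\rvw_0)=O(n)$ with high probability at random initialization. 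The delicate point is that $R$ feeds back into the width requirement of Step~1. Because $R$ depends on $\lambda_{\min}$ and not on $m$, this loop closes by taking $m$ large enough, which gives the stated polynomial width $m=\Omega(\lambda_{\min}^{-2})$ with other quantities suppressed. The induction then yields $\mathcal{L}(\rvw_t)\le(1-\eta\lambda_{\min})^t\mathcal{L}(\rvw_0)$ for all $t$, and convergence to a global minimizer inside the ball.
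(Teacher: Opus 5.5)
Your proposal is correct and follows essentially the argument of \cite{liu2022loss}, which the paper cites rather than reproves. That argument runs as follows: a transition-to-linearity Hessian bound on a ball $B_R(\rvw_0)$ with $R\propto\sqrt{\beta\mathcal{L}(\rvw_0)}/\lambda_{\min}$ gives a PL$^*$ constant of order $\lambda_{\min}$, the descent lemma then gives geometric decay, and an induction on the summed step lengths keeps the iterates in the ball, so the dependence between $R$ and $m$ closes. The constant slack you absorb is consistent with how the paper actually uses the result: its appendix quotes the rate as $(1-\eta\lambda_{\min}/2)^t$, and the informal width $\Omega(\lambda_{\min}^{-2})$ likewise hides the $R^{6L+2}$ factor, and hence further powers of $1/\lambda_{\min}$, that appears in the precise requirement.
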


As demonstrated in Figure \ref{fig:training_curve_intro}, the theoretical convergence bound depicted by Eq.~(\ref{eq:slow_rate}), although exponential in principle, is typically quite flat and cannot match the empirically observed fast convergence. In this paper, we aim to provide a tighter theoretical convergence bound so that it matches the empirical training curves.

\section{Label-NTK Alignment and Residual-NTK alignment}
\label{sec:label-ntk-cor}
In this section, we present our findings of a strong correlation between the NTK matrix $K$ and the data label vector $\rvy$, and provide a theoretical justification for this phenomenon. This correlation helps address the limitations of existing neural network theories and facilitates improved analyses of optimization and generalization, which will be discussed in Section \ref{improved_optimization_and_generalization}.

Our {\bf key observation} regarding the limitations of existing theories (namely, their reliance on the extreme value of $\lambda_{\min}$) is that these analyses do not account for the relationship between the NTK matrix $K$ and the label vector $\rvy$, or the residual vector $\rvf(\rvw_0)-\rvy$. As a result, in order to derive worst-case upper bounds, they must implicitly assume the worst-case scenario in which $\rvy$ lies in the eigen-space corresponding to the eigenvalue $\lambda_{\min}$. Consequently, even with the nice relation in Eq.~(\ref{eq:f-y_existing}), the resulting convergence rate is still impractically slow, and the required network width becomes excessively large. This raises the key questions: \emph{Does this worst-case scenario actually occur in practice?} Or, more importantly, \emph{is there any correlation between these quantities that prevents such a worst case from arising?}

\subsection{Label-NTK alignment} 
Here, we show both experimentally and theoretically that the label vector $\rvy$ is strongly correlated with the NTK matrix $K$. In particular, we argue that $\rvy$ has much smaller, often negligible, components in the eigen-spaces associated with small eigenvalues. Technically, we observe that the following relation holds  for some $\alpha > 0$:
\begin{equation}\label{eq:y-ntk-align}
    (\rvv_i^\top\rvy)^2 \sim \lambda_i^\alpha,
\end{equation}
where $\lambda_i$ and $\rvv_i$ denote the eigenvalues and eigenvectors of the NTK matrix $K$, with the eigenvalues $\lambda_i$ ordered in descending order.

Our experiments demonstrate that, across different network architectures (CNN in Figure \ref{fig:cifar10_y_projections_vs_eigenspectrum_cnn} and MLP in Figure \ref{fig:cifar10_y_projections_vs_eigenspectrum_mlp}), $\log (\rvv_i^\top\rvy)^2$ and $\log \lambda_i$ always have a linear relationship, especially at the small-eigenvalue regime, obeying the scaling law of Eq. (\ref{eq:y-ntk-align}). 
It is also interesting to observe that the scaling power $\alpha$ has a value around $2$. Additional plots on more datasets can be found in Appendix \ref{sec_app:additional_alignment_plots}. 

Moreover, we find that this Label-NTK alignment is preserved during training, even though the NTK matrix may change due to the finite network width.

\begin{figure}[t]
    \centering
    \begin{subfigure}[t]{0.49\linewidth}
        \centering
        \includegraphics[width=\linewidth]{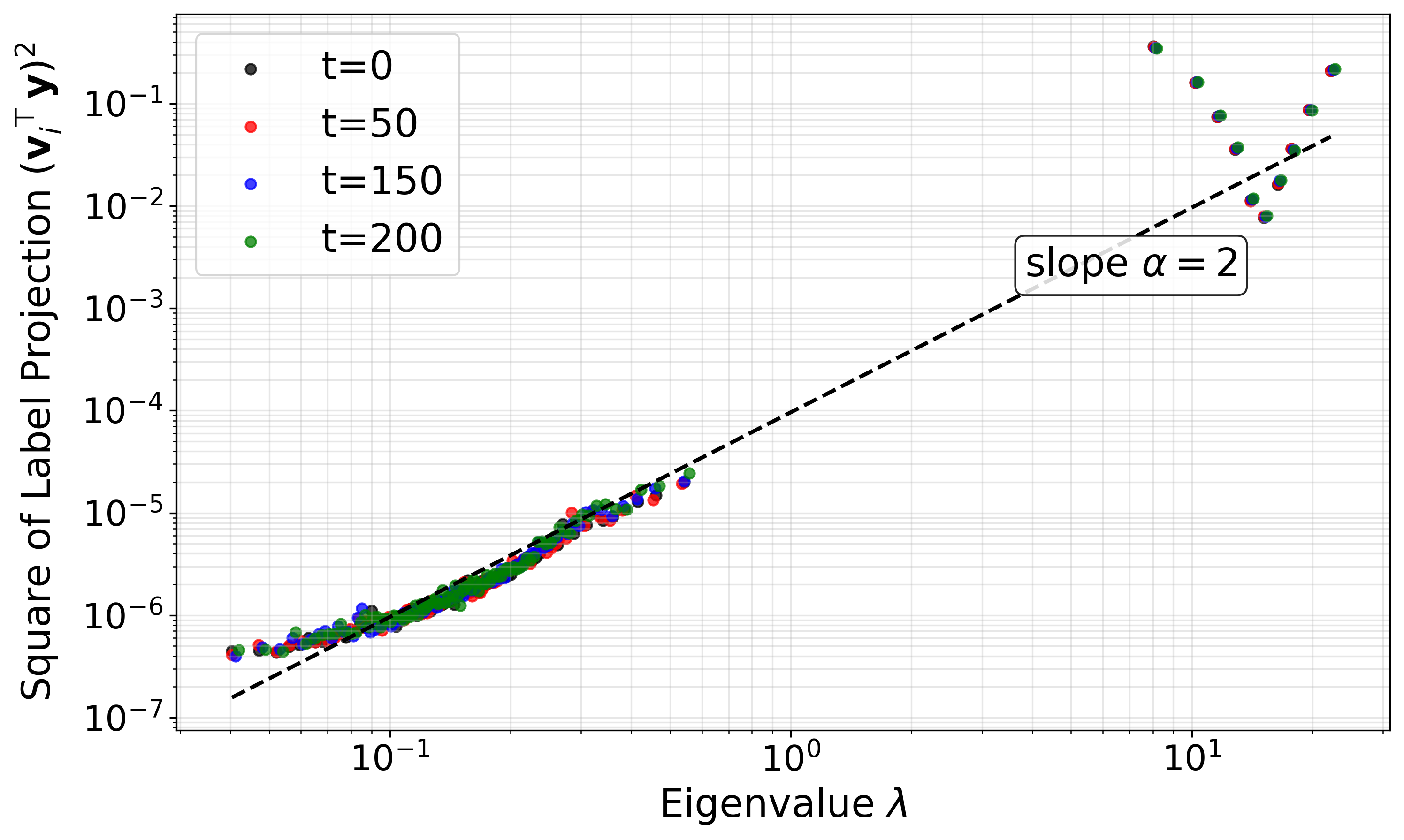}
        \caption{Label-NTK alignment}
        \label{fig:cifar10_y_projections_vs_eigenspectrum_mlp}
    \end{subfigure}
    \hfill
    \begin{subfigure}[t]{0.49\linewidth}
        \centering
        \includegraphics[width=\linewidth]{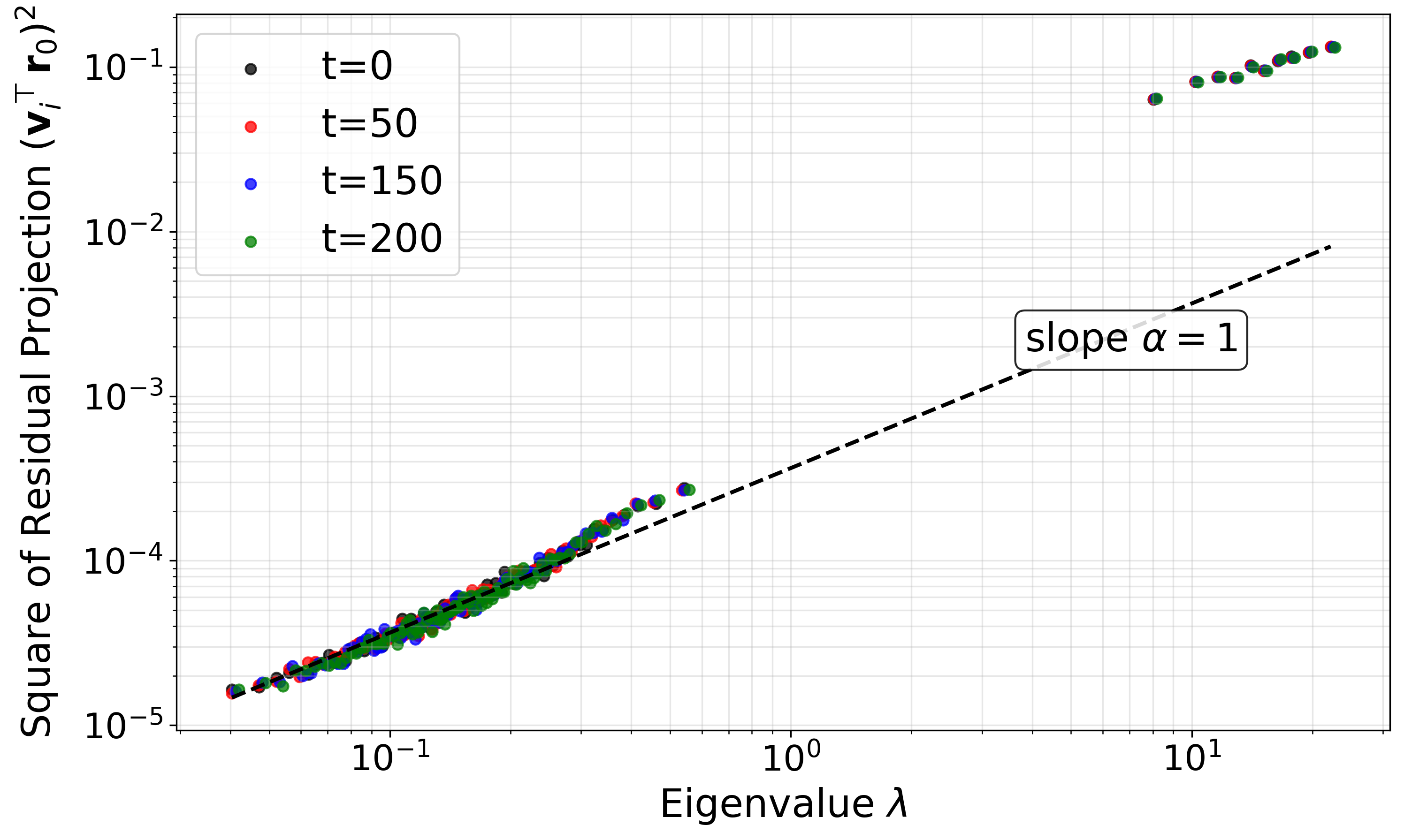}
        \caption{Residual-NTK alignment}
\label{fig:cifar10_r0_projections_vs_eigenspectrum_mlp}
    \end{subfigure}
    \caption{{\it Label-NTK alignment} and {\it Residual-NTK alignment}, of a five-layer MLP (width$=512$) trained on the whole CIFAR-10 dataset at different training time $t$. Each point is computed as an average over 500 batches, each containing 100 samples. Note: slope $\alpha=2$ (left) and $\alpha=1$ (right).} \label{fig:cifar10_mlp_projections_vs_eigenspectrum}
\end{figure}

\paragraph{Theoretical justification.} Now, we theoretically justify this label-NTK correlation with a focus on small eigenvalues. We make the following assumptions about the dataset.
\begin{assumption}[Data assumption]
\label{assum:lipschitz_continuity}
In the dataset $\mathcal{D}=\{(\rvx_i, y_i)\}_{i=1}^n$, 
\begin{itemize}[leftmargin=*]
    \item ({\bf Unit input length}). inputs $\|\rvx_i\|=1$ for all $i\in [n]$,
    \item ({\bf Non-degeneracy}). $\rvx_i \nparallel \rvx_j$ for all $i\ne j \in [n]$,
    \item ({\bf Lipschitz continuity of the ground truth function}). There exists a constant $L_y>0$ such that $|y_i-y_j|\le L_y\|\rvx_i-\rvx_j\|$ hold for all $i,j\in [n]$.
\end{itemize}
\end{assumption}
The first two data assumptions, unit length and non-degeneracy, are standard simplifications in theoretical analyses, see for example \cite{du2019gradient,dugradient}. The assumption that the ground-truth data function is Lipschitz continuous is also natural, as it formalizes the intuition that {\it similar inputs have similar labels}: if two inputs are very close $\|\rvx_i-\rvx_j\|\ll 1$, then their labels $y_i$ and $y_j$ should not be significantly different, which is often the case in real-world data. In classification settings, where the $y_i$ represent discrete categories, this condition corresponds to the existence of a margin between classes, such as assumed in \cite{ji2020polylogarithmic}.

\noindent {\it Dataset with $n=2$ data points.} 
Let's start with a simple case to illustrate how the label-NTK alignment arises. We consider a dataset $\mathcal{D}_2=\{(\rvx_1,y_1), (\rvx_2, y_2)\}$ of size $2$, where the inputs $\rvx_1$ and $\rvx_2$ are very close: $\|\rvx_1-\rvx_2\| \ll 1$.   Denote the angle between the two inputs as $\theta$. Under Assumption \ref{assum:lipschitz_continuity}, we have $0<\theta\ll1$ and $|y_1-y_2|=O(\theta)$. To simplify the analysis, we consider the NTK matrix $K$ in the infinite width limit. For networks of finite but sufficiently large width, the empirical NTK deviates from $K$ only by an asymptotically negligible amount \cite{jacot2018ntk,du2019gradient,dugradient,liu2020linearity}.

\begin{theorem}[label-NTK Alignment for two-point dataset]
\label{thm:two_points_small}
Consider the dataset $\mathcal{D}_2$ with $0<\theta\ll1$ and a fully-connected neural network of depth $L$ and infinite width. Suppose the data Assumption \ref{assum:lipschitz_continuity} holds. Then, 
\begin{equation}
    (\rvv_{\min}^\top \rvy)^2 \sim \lambda_{\min}^2 = O(\theta^2), ~~ (\rvv_{\max}^\top \rvy)^2 \sim \lambda_{\max}^2 = O(1).
\end{equation}
\end{theorem}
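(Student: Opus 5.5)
For a $2\times 2$ kernel matrix everything can be computed explicitly, so the plan is to write $K$ in closed form and diagonalize it by hand. With unit-norm inputs, the infinite-width NTK of a depth-$L$ fully-connected network is a dot-product kernel, $\mathcal{K}(\rvx,\rvx')=\kappa(\rvx^\top\rvx')=\kappa(\cos\theta)$. Both diagonal entries therefore equal the same constant $a\triangleq\kappa(1)=\Theta(1)$, and the off-diagonal entry is $b\triangleq\kappa(\cos\theta)$, so
\begin{equation*}
K=\begin{pmatrix} a & b\\ b & a\end{pmatrix}.
\end{equation*}
Because the matrix is symmetric with equal diagonal, its eigenvectors do not depend on $\kappa$: they are $\rvv_{\max}=\frac{1}{\sqrt2}(1,1)^\top$ with eigenvalue $\lambda_{\max}=a+b$, and $\rvv_{\min}=\frac{1}{\sqrt2}(1,-1)^\top$ with eigenvalue $\lambda_{\min}=a-b$. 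The sign of $b$ fixes which is which, and $b>0$ for ReLU-type NTKs.

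The projections then read off directly:
\begin{equation*}
(\rvv_{\min}^\top\rvy)^2=\tfrac12(y_1-y_2)^2, \qquad (\rvv_{\max}^\top\rvy)^2=\tfrac12(y_1+y_2)^2.
\end{equation*}
Assumption \ref{assum:lipschitz_continuity} gives $|y_1-y_2|\le L_y\|\rvx_1-\rvx_2\|=2L_y\sin(\theta/2)=O(\theta)$, so $(\rvv_{\min}^\top\rvy)^2=O(\theta^2)$. For the top direction, $\lambda_{\max}=a+b\to 2\kappa(1)=\Theta(1)$, and $(y_1+y_2)^2=\Theta(1)$ for generic labels of order one. Hence $(\rvv_{\max}^\top\rvy)^2\sim\lambda_{\max}^2=O(1)$.

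The main obstacle is the eigenvalue estimate $\lambda_{\min}=\kappa(1)-\kappa(\cos\theta)=\Theta(\theta)$. This order, rather than $\theta^2$, is what makes the exponent come out as $2$, since $(\rvv_{\min}^\top\rvy)^2=O(\theta^2)=O(\lambda_{\min}^2)$.

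To prove it, I would use the layerwise recursion for the ReLU NTK. It has the form $\Sigma^{(h)}=\hat\sigma(\Sigma^{(h-1)})$ and $\dot\Sigma^{(h)}=\hat{\dot\sigma}(\Sigma^{(h-1)})$, with $\Theta^{(h)}=\Theta^{(h-1)}\dot\Sigma^{(h)}+\Sigma^{(h)}$. The arc-cosine kernels satisfy
\begin{equation*}
\kappa_0(u)=1-\tfrac{1}{\pi}\arccos u = 1-\tfrac{\phi}{\pi}, \qquad \kappa_1(u)=1-O(\phi^2),
\end{equation*}
where $\phi=\arccos u$ is the current angle. The derivative kernel $\kappa_0$ therefore contributes a linear-in-angle deficit, and I would show by induction over layers that:
\begin{itemize}
\item the angle between the layer-$h$ representations stays $\Theta(\theta)$, since $\kappa_1$ maps $1-\phi^2/2$ to $1-\Theta(\phi^2)$;
\item $\Theta^{(h)}(1)-\Theta^{(h)}(\cos\theta)=\Theta(\theta)$, with constants depending only on $L$.
\end{itemize}
The upper bound $O(\theta)$ follows easily from Lipschitz-type bounds on each step of the recursion. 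The lower bound needs positivity of every term: each layer's $\dot\Sigma$ deficit $\phi_h/\pi$ multiplies a positive $\Theta^{(h-1)}$, so the deficits add rather than cancel.

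If the $\sim$ is read as a two-sided comparison, a matching lower bound $|y_1-y_2|\gtrsim\theta$ is not implied by Lipschitz continuity. I would state the result as the upper bound $(\rvv_{\min}^\top\rvy)^2\lesssim\lambda_{\min}^2$, which is what the $O(\cdot)$ form of Theorem \ref{thm:two_points_small} asserts.
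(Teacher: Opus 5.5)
Your proposal is correct and follows the paper's proof. It uses the same explicit diagonalization of the symmetric $2\times 2$ kernel, with eigenvectors $\frac{1}{\sqrt2}(1,\pm1)^\top$, and the same Lipschitz bound $(\rvv_{\min}^\top\rvy)^2=\frac12(y_1-y_2)^2\le L_y^2(1-\cos\theta)=O(\theta^2)$. The one difference is that the paper imports the key estimate $\cos\phi=(1-\frac{L}{2\pi}\theta+o(\theta))\cos\theta$, i.e.\ $\lambda_{\min}=\Theta(\theta)$ for the normalized kernel, from Corollary~3.3 of \cite{liu2025better}, whereas you derive it from the arc-cosine NTK recursion. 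Your derivation works: the per-layer deficits $\Theta^{(h-1)}\phi_{h-1}/\pi\approx h\theta/\pi$ sum to $\frac{L(L+1)}{2\pi}\theta$, and after normalizing by the diagonal entry they give exactly that constant $\frac{L}{2\pi}$.
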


\begin{remark}
It is worth pointing out that the above theorem not only explains the strong correlation between the label vector $\rvy$ and the NTK eigen-system, but also precisely predicts the scaling exponent $\alpha = 2$, which we observed in Figures \ref{fig:cifar10_y_projections_vs_eigenspectrum_cnn} and \ref{fig:cifar10_y_projections_vs_eigenspectrum_mlp}.
\end{remark}
\begin{proof}
As $\nabla_\rvw f(\rvw; \rvx)$ is independent of $\rvw$ for the infinitely wide neural network and the context is clear that $\nabla$ is the derivative w.r.t. $\rvw$, we denote $\nabla_\rvw f(\rvw; \rvx)$ simply as $\nabla f(\rvx)$.
With the unit input length of data inputs, Assumption \ref{assum:lipschitz_continuity}, it has been found in \cite{liu2025better} that $\frac{1}{L+1}\|\nabla f(\rvx_i)\| = \|\rvx_i\| = 1$ for all $i$.
In the case of two data points, the NTK $K$ is a $2\times 2$ matrix and has the following form.
\[
\bar{K} \triangleq \frac{1}{(L+1)^2} K
=
\begin{pmatrix}
1 & \cos\phi\\
\cos\phi & 1
\end{pmatrix},
\]
where $\phi$ is the angle between the two vectors $\nabla f(\rvx_1)$ and  $\nabla f(\rvx_2)$.
It is easy to see that $\bar{K}$ has the following eigenvalues and eigenvectors:
\[
\lambda_{\max} = 1+\cos \phi, ~~ \rvv_{\max} = \tfrac{1}{\sqrt2}(1,1)^\top,\quad \lambda_{\min} = 1-\cos \phi, ~~ \rvv_{\min} = \tfrac{1}{\sqrt2}(1,-1)^\top.
\]
It has been shown by previous work, Corollary~3.3 of ~\cite{liu2025better}, that the angle $\phi$ has the following relation with the angle between the two inputs $\theta$:
\begin{equation}
\cos\phi=\big(1-a\theta+o(\theta)\big)\cos\theta,
\qquad a:=\frac{L}{2\pi}.
\label{eq:phi_theta_cor_in_thm}
\end{equation}
Hence, we have
\begin{equation}
   \lambda_{\max} \ge  1, ~~~ \lambda_{\min} = O(\theta).
\end{equation}

Using the formula for 
$\rvv_{\min}$ and $\rvv_{\max}$, we get
\[
(\rvv_{\min}^\top \rvy)^2=\frac{(y_1-y_2)^2}{2}, \quad (\rvv_{\max}^\top \rvy)^2=\frac{(y_1+y_2)^2}{2} = O(1).
\]
Because the true data function $y(\rvx)$ is Lipschitz continuous, by Assumption~\ref{assum:lipschitz_continuity}, we get
\begin{align*}
    (\rvv_{\min}^\top \rvy)^2
\le \frac{L_{y}^2}{2}\|\rvx_1-\rvx_2\|_2^2
= L_{y}^2(1-\cos\theta)
= O(\theta^2).
\end{align*}
\end{proof}

\noindent {\it Dataset with $n>2$ data points.} We now consider the general case with $n > 2$ data points and show that the above result extends. We assume that there exists at least one pair of distinct data points, $\rvx_i$ and $\rvx_j$ with $i \ne j$, that are very close to each other; in this case, the NTK will exhibit eigenvalues of small magnitude. 
This scenario commonly arises in real-world datasets, particularly when the dataset size $n$ is large.

\begin{theorem}
\label{thm:n_points_pair_projection_spectrum}
Consider the dataset $\mathcal{D}$ of size $n>2$ and a fully-connected neural network of depth $L$ and infinite width. Suppose the data Assumption \ref{assum:lipschitz_continuity} holds and there is a pair of data points $\rvx_i$ and $\rvx_j$, $i\ne j$, such that the angle $\theta$ between the two data points is small $\theta\ll1$. Then, there is at least one eigenvalue $\lambda$ and its corresponding eigenvector $\rvv$ satisfy the following:
\begin{equation}
    (\rvv^\top \rvy)^2 \sim \lambda^2 = O(\theta^2).
\end{equation}
\end{theorem}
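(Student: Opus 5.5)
We reduce to the two-point case of Theorem~\ref{thm:two_points_small} through a variational and perturbation argument on the $n\times n$ matrix $K$. Consider the unit test vector $\rvu = \frac{1}{\sqrt2}(\mathbf{e}_i - \mathbf{e}_j)$. The principal $2\times 2$ submatrix of $K$ on indices $\{i,j\}$ is exactly the two-point NTK analyzed before, so
\[
\rvu^\top K \rvu = \tfrac12\big(K_{ii}+K_{jj}-2K_{ij}\big) = (L+1)^2(1-\cos\phi) = O(\theta),
\]
by the relation between $\phi$ and $\theta$ quoted in the proof of Theorem~\ref{thm:two_points_small}. Courant--Fischer then gives $\lambda_{\min}(K)\le \rvu^\top K\rvu = O(\theta)$. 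By the Lipschitz assumption, $(\rvu^\top\rvy)^2 = (y_i-y_j)^2/2 = O(\theta^2)$, so the test vector $\rvu$ already carries the desired alignment. The remaining task is to transfer this statement from $\rvu$ to a genuine eigenpair $(\lambda,\rvv)$.

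For the transfer, expand $\rvu = \sum_k c_k \rvv_k$, so that $\sum_k c_k^2 \lambda_k = O(\theta)$. Fix a threshold $\tau$ of order $\theta$, say $\tau = C\theta$ with $C$ large. Markov's inequality on the weights $c_k^2$ gives $\sum_{\lambda_k>\tau} c_k^2 \le O(\theta)/\tau \le 1/2$. Hence at least half of the mass of $\rvu$ lies in the small spectral subspace $V_\tau = \mathrm{span}\{\rvv_k : \lambda_k\le\tau\}$, and this subspace is nonempty.

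Next we pick the eigenvector. If we could take $\rvv$ to be the normalized projection of $\rvu$ onto $V_\tau$, its projection onto $\rvy$ would be controlled by $\rvu^\top\rvy$ together with a leakage term. The honest route, however, needs an actual eigenvector. We would therefore argue that there is a \emph{single} eigenvalue $\lambda = O(\theta)$ that is well separated from the rest of the spectrum. Separation should follow because the other points are not close to $\rvx_i,\rvx_j$: the rank-one-like structure $K \approx$ (matrix with rows $i,j$ nearly equal) $+$ a perturbation of size $O(\theta)$ should be exploited. Writing $K = K_0 + E$, where $K_0$ duplicates row and column $i$ into $j$, the matrix $K_0$ has the exact null vector $\rvu$ and $\|E\| = O(\sqrt{\theta})$ or better. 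Davis--Kahan, with a spectral gap $g$ of $K_0$ around $0$ (assumed to be order $1$ by non-degeneracy of the remaining points), then gives an eigenvector $\rvv$ of $K$ with $\|\rvv-\rvu\| = O(\|E\rvu\|/g)$. Here $\|E\rvu\|$ measures $\|\nabla f(\rvx_i)-\nabla f(\rvx_j)\|$-type differences, which are $O(\sqrt\theta)$ by the $\phi$--$\theta$ relation.

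Finally we bound the projection:
\[
|\rvv^\top\rvy| \le |\rvu^\top\rvy| + \|\rvv-\rvu\|\,\|\rvy\|.
\]
The first term is $O(\theta)$, but the second is only $O(\sqrt\theta)$ under this crude estimate, which gives $(\rvv^\top\rvy)^2=O(\theta)$ rather than $O(\theta^2)$. This is the main obstacle: the leakage must be shown to be $O(\theta)$, and the difficulty is precisely in bounding the leakage term $\|\rvv-\rvu\|\,\|\rvy\|$ tightly enough. We would try a second-order perturbation argument in which the $O(\sqrt\theta)$ component of $E\rvu$ is orthogonal-ish to $\rvy$, or else replace $\rvy$ by its Lipschitz structure: $\rvy$ itself is nearly symmetric in $(i,j)$, and the leakage of $\rvv-\rvu$ onto directions outside $\{i,j\}$ pairs with $y_k$ for $k\ne i,j$ only through smooth kernel differences. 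If that fails, we would state the result with $\lambda = O(\theta)$ and $(\rvv^\top\rvy)^2 = O(\theta^2)$ under an explicit gap assumption, matching the approximate relation ``$\sim$'' used in the statement.
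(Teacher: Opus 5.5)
Your skeleton (the antisymmetric test vector $\mathbf{u}=\frac{1}{\sqrt2}(\mathbf{e}_i-\mathbf{e}_j)$, then a perturbative transfer to a true eigenvector) has essentially the same structure as the paper's proof. The paper writes $\bar K$ in block form and reads off the eigenpair $\lambda=a\theta+o(\theta)$, $\rvv=\frac{1}{\sqrt2}\big(1,-1,-B^{-1}(\rva-\rvb)\big)+o(\theta)$. However, as you acknowledge, your argument only yields $(\rvv^\top\rvy)^2=O(\theta)$, so it does not prove the statement. What is missing is a concrete estimate, not a technicality.

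You control the coupling between the close pair and the other points via $|K_{ik}-K_{jk}|=|\langle\nabla f(\rvx_i)-\nabla f(\rvx_j),\nabla f(\rvx_k)\rangle|\le\|\nabla f(\rvx_i)-\nabla f(\rvx_j)\|\,\|\nabla f(\rvx_k)\|$. Since $\|\nabla f(\rvx_i)-\nabla f(\rvx_j)\|^2=2(L+1)^2(1-\cos\phi)=O(\theta)$, this gives only $O(\sqrt\theta)$. The paper instead uses that the infinite-width kernel entries are Lipschitz functions of the input angle, $|\cos\phi_{ik}-\cos\phi_{jk}|=O(|\theta_{ik}-\theta_{jk}|)$. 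Together with $|\theta_{ik}-\theta_{jk}|\le\theta$ (triangle inequality on the sphere), this gives $\rva-\rvb=O(\theta)$ entrywise. In words: the gradient difference has norm of order $\sqrt\theta$, but its inner product with the gradient at any other point is only $O(\theta)$. Cauchy--Schwarz throws away exactly this cancellation.

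With that estimate, your own argument closes. You get $\|K\mathbf{u}\|=O(\theta)$, not merely $\mathbf{u}^\top K\mathbf{u}=O(\theta)$. Hence $\|E\mathbf{u}\|=\|K\mathbf{u}\|=O(\theta)$ in your splitting $K=K_0+E$, since $K_0\mathbf{u}=0$. Davis--Kahan with an $O(1)$ gap then gives $\|\rvv-\mathbf{u}\|=O(\theta)$, and your triangle inequality gives $|\rvv^\top\rvy|\le|\mathbf{u}^\top\rvy|+O(\theta)\|\rvy\|=O(\theta)$.

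The fallbacks you list cannot replace this step. The Courant--Fischer/Markov route uses only the quadratic form. That bounds the component of $\mathbf{u}$ outside the small eigenspace by $O(\sqrt{\theta/g})$ for a gap $g$. An explicit gap assumption does not improve the power of $\theta$, because Davis--Kahan with $\|E\mathbf{u}\|=O(\sqrt\theta)$ still returns $O(\sqrt\theta)$. The ``second-order'' or ``$\rvy$ nearly symmetric'' ideas are not needed either. The leakage satisfies $\|(I-P)\mathbf{u}\|\le\|K\mathbf{u}\|/g$, so it is governed by the differences $K_{ik}-K_{jk}$, and it is these that must be shown to be $O(\theta)$. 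The $O(1)$ gap itself (a bounded $B^{-1}$ in the paper's notation) is also assumed implicitly by the paper, so on that point your proposal is no weaker.
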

The proof of the theorem is deferred to Appendix \ref{sec_app:pf_multi_data}.
\begin{remark}
The small eigenvalue $\lambda$ and the corresponding small eigen-projection $(\rvv^\top \rvy)^2$ arise from the small angular separation between the pair of data points $\rvx_i$ and $\rvx_j$. If more such closely spaced pairs exist, we expect additional small eigenvalues and, correspondingly, more small eigen-projections.
\end{remark}

\subsection{Residual-NTK alignment} 
Define the initial residual vector as $\rvr_0 = \rvf(\rvw_0)-\rvy$, the difference between the label vector and the network outputs at initialization. We also observe a strong correlation between the residual $\rvr_0$ with $\rvy$, which is similar to the label-NTK alignment. As shown in Figures \ref{fig:cifar10_r0_projections_vs_eigenspectrum_cnn} and \ref{fig:cifar10_r0_projections_vs_eigenspectrum_mlp}, except for the top eigenvalues, the log of projection $(\rvv_i^\top \rvy)^2$ also linearly depends on the log of eigenvalues $\lambda_i$.  Namely,
\begin{equation}
    (\rvv_i^\top \rvr_0)^2 \sim \lambda_i^\alpha.
    \label{eq:r0_y_correlation}
\end{equation}
Interestingly, the scaling power $\alpha$ of Residual-NTK alignment is observed to be about $1$, instead of the value $2$ for Label-NTK alignment. Theoretically, it is not a surprise that the residual $\rvr_0$ has a similar correlation with NTK, because intuitively the neural network function $f$ at initialization should have a certain level of Lipschitz continuity w.r.t. the input $\rvx$, which is similar to that of the true function $y(\rvx)$ as we required in Assumption \ref{assum:lipschitz_continuity}. Additional plots can be found in Appendix \ref{sec_app:additional_alignment_plots}.

\section{Improved Optimization Theory for Wide Neural Networks}
\label{improved_optimization_and_generalization}
In this section, we develop an optimization theory for wide neural networks that provides a tighter convergence bound, utilizing the correlations observed from the previous section.

Throughout this section, we assume the loss function $\mathcal{L}(\rvw)$ is $\beta$-smooth and the network function $f(\rvw)$ is $\beta_f$-smooth with $\beta,\beta_f\ge 1$, where the smoothness is defined below:
\begin{definition}
    We say a function $g$ is $\beta$-smooth, if 
\begin{equation*}
    \|\nabla g(\rvw_1)-\nabla g(\rvw_2)\| \le \beta \|\rvw_1-\rvw_2\|, ~~~ \forall \rvw_1, \rvw_2.
\end{equation*}
\end{definition}
Without loss of generality, we assume the loss at initialization $\mathcal{L}(\rvw_0) \ge 1$.

Based on the findings of the strong residual-NTK alignment  at initialization, we put the following condition:
\begin{condition}[Residual-NTK alignment]
\label{assum:residual_ntk_alignment}
There exist constants \(c\gg\delta>0\) such that
\begin{equation}\label{eq:res_ntk_alignment}
    (c-\delta)\lambda_i \le (\rvv_i^\top \rvr_0)^2 \le (c+\delta)\lambda_i,
\quad \forall i\in[n].
\end{equation}
\end{condition}
\begin{remark}
In this condition, we only require that the relation holds at initialization, although we empirically observed in Section \ref{sec:label-ntk-cor} that the same relation still holds during and after training.
\end{remark}
\begin{remark}
Compared to Eq. (\ref{eq:r0_y_correlation}), this condition permits fluctuations around the expectation $(\rvv_i^\top \rvr_0)^2 \sim \lambda_i$, thereby making it milder.
\end{remark}

With all the above setup, we are ready to present the main theorem. 
\begin{theorem}\label{thm:loss_bounds}
Consider a fully-connected neural network $f$ with $L$ hidden layers. Suppose Condition \ref{assum:residual_ntk_alignment} holds.  For all $\varepsilon \in (0,1)$, if the network width satisfies $m=\tilde{\Omega}\left(\frac{nR^{6L+2}}{\lambda_{\min}^2\varepsilon^2}\right)$ with $R=\frac{8\beta\sqrt{\mathcal{L}(\rvw_0)}}{\lambda_{\min}}$ and the learning rate $\eta\le \frac{1}{2\beta+\beta_f\sqrt{2\mathcal{L}(\rvw_0)}}$, then the square loss $\mathcal{L}(\rvw)$ under gradient descent has the following convergence bounds:
\begin{equation}\label{eq:thm_loss_bounds}
    \frac{c-\delta}{2}\texttt{tr}[(I-\eta K)^{2t}K] - \frac{\varepsilon}{2}
    \le \mathcal{L}(\rvw_t) \le \frac{c+\delta}{2}\texttt{tr}[(I-\eta K)^{2t}K] + \frac{\varepsilon}{2}. 
\end{equation}
\end{theorem}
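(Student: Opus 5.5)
The plan is to compare the actual gradient-descent trajectory with the linearized dynamics driven by the fixed infinite-width NTK $K$, and then evaluate the linearized loss exactly using Condition~\ref{assum:residual_ntk_alignment}. Write $\rvr_t=\rvf(\rvw_t)-\rvy$ and let $\tilde\rvr_t=(I-\eta K)^t\rvr_0$ be the residual of the linearized model.

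\textbf{Step 1: the linearized loss.} In the eigenbasis of $K$,
\begin{equation*}
\tfrac12\|\tilde\rvr_t\|^2=\tfrac12\sum_{i=1}^n(1-\eta\lambda_i)^{2t}(\rvv_i^\top\rvr_0)^2 .
\end{equation*}
Each weight $(1-\eta\lambda_i)^{2t}$ is nonnegative. Condition~\ref{assum:residual_ntk_alignment} gives $(c-\delta)\lambda_i\le(\rvv_i^\top\rvr_0)^2\le(c+\delta)\lambda_i$. Substituting and summing yields
\begin{equation*}
\tfrac{c-\delta}{2}\,\texttt{tr}[(I-\eta K)^{2t}K]\le\tfrac12\|\tilde\rvr_t\|^2\le\tfrac{c+\delta}{2}\,\texttt{tr}[(I-\eta K)^{2t}K].
\end{equation*}
We need $\eta\lambda_{\max}\le1$ so that $|1-\eta\lambda_i|\le1$. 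This follows from the step-size bound, since $\lambda_{\max}\le\beta$ by the $\beta$-smoothness of the loss.

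\textbf{Step 2: closeness of the trajectories.} It then suffices to show $\big|\|\rvr_t\|^2-\|\tilde\rvr_t\|^2\big|\le\varepsilon$ for all $t$. Since $\|\rvr_t\|,\|\tilde\rvr_t\|\le\|\rvr_0\|$, it is enough to prove $\|\rvr_t-\tilde\rvr_t\|\le\varepsilon/(2\sqrt{2\mathcal{L}(\rvw_0)})$. I would invoke the standard wide-network machinery, following \cite{liu2022loss} as in Proposition~\ref{thm:prior_opt}.

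First, use the Hessian-norm bound $\|H_f\|=\tilde O(R^{3L}/\sqrt m)$ on a ball $B(\rvw_0,R)$; the $R^{6L}$ in the width requirement suggests exactly this squared. This gives that the empirical NTK satisfies $\|K(\rvw)-K\|\le\epsilon_K$ on the ball. The bound combines the initialization concentration $K(\rvw_0)\approx K$ with the Lipschitz variation of $J$, for $\epsilon_K$ of order $\tilde O(\sqrt n R^{3L+1}/\sqrt m)$.

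Second, for $m$ as stated, $\epsilon_K\le\lambda_{\min}/2$. The PL condition then holds on the ball with constant of order $\lambda_{\min}$, and the usual argument keeps the iterates inside $B(\rvw_0,R)$ with $R=8\beta\sqrt{\mathcal{L}(\rvw_0)}/\lambda_{\min}$. The step-size condition $\eta\le 1/(2\beta+\beta_f\sqrt{2\mathcal{L}(\rvw_0)})$ controls the second-order Taylor remainder in the residual update.

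\textbf{Step 3: error recursion.} By Taylor expansion,
\begin{equation*}
\rvr_{t+1}=(I-\eta K)\rvr_t+\eta\big(K-\bar K_t\big)\rvr_t,
\end{equation*}
where $\bar K_t=\int_0^1 J(\rvw_t+s(\rvw_{t+1}-\rvw_t))J(\rvw_t)^\top ds$ satisfies $\|\bar K_t-K\|\le\epsilon_K$. Hence $e_t=\rvr_t-\tilde\rvr_t$ obeys $e_{t+1}=(I-\eta K)e_t+\eta\Delta_t\rvr_t$ with $\|\Delta_t\|\le\epsilon_K$. Unrolling gives
\begin{equation*}
\|e_t\|\le\eta\epsilon_K\sum_s\|\rvr_s\|.
\end{equation*}
Using the PL-driven geometric decay $\|\rvr_s\|\le(1-\eta\lambda_{\min}/2)^{s/2}\|\rvr_0\|$, the sum is $O(\|\rvr_0\|/(\eta\lambda_{\min}))$. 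Therefore
\begin{equation*}
\|e_t\|\lesssim\frac{\epsilon_K\sqrt{\mathcal{L}(\rvw_0)}}{\lambda_{\min}}\quad\text{uniformly in }t.
\end{equation*}
Requiring this to be at most $\varepsilon/\sqrt{\mathcal{L}(\rvw_0)}$ means $\epsilon_K\lesssim\varepsilon\lambda_{\min}/\mathcal{L}(\rvw_0)$. Since $\mathcal{L}(\rvw_0)\le R^2\lambda_{\min}^2$ up to constants, this becomes a width condition of the stated form $m=\tilde\Omega(nR^{6L+2}/(\lambda_{\min}^2\varepsilon^2))$, with the polynomial factors in $R$ absorbed.

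\textbf{Main obstacle.} I expect Step 3 to be the main difficulty, namely obtaining a deviation bound that is uniform in $t$ rather than growing with $t$. The recursion only closes because the residuals decay geometrically at rate $\lambda_{\min}$, which is why $\lambda_{\min}$ survives in the width requirement even though it is absent from the rate. Careful bookkeeping of the exponents of $R$ is needed to match the stated $m$.
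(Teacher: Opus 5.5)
Your proposal is correct and follows essentially the same route as the paper. Both arguments use a linearization lemma $\rvf(\rvw_t)-\rvy=(I-\eta K)^t\rvr_0+\Delta_t$ whose deviation bound is uniform in $t$, obtained by unrolling the residual recursion and summing a geometric series at rate $\lambda_{\min}$, with \cite{liu2022loss} supplying confinement to $B_R(\rvw_0)$ and PL decay; both then sum Condition~\ref{assum:residual_ntk_alignment} against the weights $(1-\eta\lambda_i)^{2t}$. The differences are only in bookkeeping: you fold the second-order Taylor remainder into the averaged kernel $\bar K_t$ rather than bounding a separate term $\boldsymbol{\zeta}_t$, you compare directly with the infinite-width $K$ (explicitly accounting for $K(\rvw_0)\approx K$, which the paper's proof silently identifies), and you get the geometric sum from the decay of $\|\rvr_s\|$ rather than from the contraction $\|I-\eta K\|_{op}\le 1-\eta\lambda_{\min}$.
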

\begin{proof}
    The proof is deferred to Appendix \ref{sec_app:pf_loss_bounds}.
\end{proof}

The upper bound established in this theorem depends on the trace of a function of the NTK matrix $K$, rather than on the smallest eigenvalue $\lambda_{min}$, resulting in a substantially tighter bound than those in prior work. We numerically validate this bound for a convolutional neural network in Figure \ref{fig:training_curve_intro}. Additional demonstrations across different datasets and network architecture (i.e., MLP) are provided in Appendix \ref{sec_app:additional_alignment_plots}. Across all settings, we observe that our convergence bound is significantly tighter than existing bounds and tracks the true training curves much more accurately.

Moreover, the above theorem also provides a lower bound on the convergence, which is of the same order as the upper bound, suggesting the optimality of the upper bound.

The following corollary explicitly writes out the dependence of the convergence bound on the loss value $\mathcal{L}(\rvw_0)$ at initialization. 
\begin{corollary}
    In the same setting as Theorem \ref{thm:loss_bounds}, the loss function during training is upper bounded by:
    \begin{equation}\label{eq:bound_with_l_0}
        \mathcal{L}(\rvw_t) \le \frac{c+\delta}{c-\delta}\frac{\texttt{tr}[(I-\eta K)^{2t}K]}{\texttt{tr}[K]}\mathcal{L}(\rvw_0) +\frac{\varepsilon}{2}.
    \end{equation}
\end{corollary}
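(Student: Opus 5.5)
The corollary follows from Theorem~\ref{thm:loss_bounds} once the unknown constant $c+\delta$ in the upper bound is replaced by something expressed through $\mathcal{L}(\rvw_0)$. The natural source of such a relation is Condition~\ref{assum:residual_ntk_alignment} evaluated at initialization, where $\mathcal{L}(\rvw_0)$ is fully determined by the projections of $\rvr_0$ onto the NTK eigenbasis.

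First, express the initial loss spectrally. Since $K$ is symmetric positive semidefinite with orthonormal eigenvectors $\{\rvv_i\}_{i=1}^n$, we have
\[
\mathcal{L}(\rvw_0)=\tfrac12\|\rvr_0\|^2=\tfrac12\sum_{i=1}^n(\rvv_i^\top\rvr_0)^2 .
\]
Summing the lower inequality of Eq.~(\ref{eq:res_ntk_alignment}) over $i\in[n]$ gives
\[
\mathcal{L}(\rvw_0)\ \ge\ \tfrac{c-\delta}{2}\sum_i\lambda_i=\tfrac{c-\delta}{2}\,\texttt{tr}[K].
\]
Because $c\gg\delta>0$, we have $c-\delta>0$, and $\texttt{tr}[K]>0$ because the diagonal entries $\|\nabla f(\rvx_i)\|^2$ are positive. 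Rearranging yields
\[
1\le \frac{2\mathcal{L}(\rvw_0)}{(c-\delta)\,\texttt{tr}[K]} .
\]

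Second, insert this into the upper bound of Eq.~(\ref{eq:thm_loss_bounds}). The quantity $\texttt{tr}[(I-\eta K)^{2t}K]=\sum_i(1-\eta\lambda_i)^{2t}\lambda_i$ is nonnegative, since every $\lambda_i\ge0$. Multiplying the first term of the upper bound by the factor above (which is $\ge 1$) can therefore only increase it:
\[
\frac{c+\delta}{2}\texttt{tr}[(I-\eta K)^{2t}K]\ \le\ \frac{c+\delta}{2}\texttt{tr}[(I-\eta K)^{2t}K]\cdot\frac{2\mathcal{L}(\rvw_0)}{(c-\delta)\texttt{tr}[K]}
=\frac{c+\delta}{c-\delta}\,\frac{\texttt{tr}[(I-\eta K)^{2t}K]}{\texttt{tr}[K]}\,\mathcal{L}(\rvw_0).
\]
Adding $\varepsilon/2$ to both sides gives Eq.~(\ref{eq:bound_with_l_0}).

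There is no real obstacle in this argument. The only points that need checking are the signs: $c-\delta>0$, $\texttt{tr}[K]>0$, and $\texttt{tr}[(I-\eta K)^{2t}K]\ge0$. Note also that Condition~\ref{assum:residual_ntk_alignment} is stated for the infinite-width kernel $K$, which is the same $K$ used in Theorem~\ref{thm:loss_bounds}, so no finite-width correction enters at this step.
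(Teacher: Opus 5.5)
Your proposal is correct and follows the same route as the paper: you sum the lower inequality of Condition~\ref{assum:residual_ntk_alignment} over $i$ to obtain $(c-\delta)\texttt{tr}[K]\le 2\mathcal{L}(\rvw_0)$, then substitute this into the upper bound of Theorem~\ref{thm:loss_bounds}. Your explicit sign checks ($c-\delta>0$, $\texttt{tr}[K]>0$, $\texttt{tr}[(I-\eta K)^{2t}K]\ge 0$) supply details that the paper leaves implicit.
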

\begin{proof}
Consider the left inequality of Eq. (\ref{eq:res_ntk_alignment}) and sum over $i$, we get
\begin{equation*}
    (c-\delta)\texttt{tr}[K] \le 2\mathcal{L}(\rvw_0).
\end{equation*}
Combining with the right inequality of Eq. (\ref{eq:thm_loss_bounds}), we obtain the result. 
\end{proof}
Note that when $t=0$, the upper bound in Eq. (\ref{eq:bound_with_l_0}) reduces to $\frac{c+\delta}{c-\delta}\mathcal{L}(\rvw_0) +\frac{\varepsilon}{2}$. Given that $\delta \ll c$ and $\varepsilon \ll 1$, we have this upper bound tight.

\paragraph{Time Complexity bound.}
Now, we consider the time complexity to achieve a certain loss value $\varepsilon>0$. We additionally assume the following power law decay in NTK eigenvalues.
\begin{assumption}[Eigenvalue power law decay]
    \label{prop:eigenvalue_power_law_decay}
NTK matrix $K \in \mathbb{R}^{n \times n}$ is positive definite, and there exist constants $C_1,C_2>0$ and $q>1$ such that the following holds:
    \begin{equation}
        C_1i^{-q}\leq \lambda_i \leq C_2i^{-q}, \quad \forall i \in [n].
    \end{equation}
\end{assumption} 
This power law decay of NTK eigenvalues has been discovered by a large body of prior work \cite{basri2019convergence,basri2020frequency,geifman2020similarity,velikanov2021explicit,murray2022characterizing} in various settings, including the special setting of uniformly distributed data in the unit sphere $\mathbb{S}^d$.

\begin{theorem}\label{thm:time_complexity}
For all $\varepsilon\in(0,1)$, consider the same setting as Theorem \ref{thm:loss_bounds}. A training loss $\mathcal{L}(\rvw_T)\le \varepsilon$ can be achieved within $T_\varepsilon$ iterations where
\begin{equation}
    T_\varepsilon = \frac{1}{2\eta C_1} \left(\frac{(q-1)\varepsilon}{2(c+\delta)C_2} + n^{1-q}\right)^{\frac{q}{1-q}}
    \log\left(\frac{4(c+\delta)\mathcal{L}_0}{(c-\delta)\varepsilon}\right).
\end{equation}
\end{theorem}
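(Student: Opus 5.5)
The plan is to apply the upper bound of Theorem~\ref{thm:loss_bounds} with the same $\varepsilon$. That leaves a contribution of $\varepsilon/2$, so it suffices to show that after $T_\varepsilon$ iterations
\[
\frac{c+\delta}{2}\,\texttt{tr}[(I-\eta K)^{2t}K] \le \frac{\varepsilon}{2}.
\]
Writing $\texttt{tr}[(I-\eta K)^{2t}K]=\sum_{i=1}^n (1-\eta\lambda_i)^{2t}\lambda_i$, I would split the spectrum at a cutoff index $k$ into a head ($i\le k$) and a tail ($i>k$). The goal is to make each part contribute at most $\varepsilon/4$ after multiplying by $\frac{c+\delta}{2}$.

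\textbf{Tail.} The learning-rate condition together with the smoothness of $\mathcal{L}$ should give $\eta\lambda_i\le 1$, so $(1-\eta\lambda_i)^{2t}\le 1$. By Assumption~\ref{prop:eigenvalue_power_law_decay} and the fact that $x^{-q}$ is decreasing, an integral comparison gives
\[
\sum_{i>k}\lambda_i \le C_2\int_k^n x^{-q}\,dx = \frac{C_2}{q-1}\bigl(k^{1-q}-n^{1-q}\bigr).
\]
I would choose $k$ so that this equals $\frac{\varepsilon}{2(c+\delta)}$, namely
\[
k^{1-q}=\frac{(q-1)\varepsilon}{2(c+\delta)C_2}+n^{1-q}.
\]
Equivalently, $k^{q}=\bigl(\frac{(q-1)\varepsilon}{2(c+\delta)C_2}+n^{1-q}\bigr)^{\frac{q}{1-q}}$, which is exactly the prefactor appearing in $T_\varepsilon$. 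With this choice the tail contributes at most $\varepsilon/4$. Note that $k\le n$ automatically, since $k^{1-q}\ge n^{1-q}$ and $q>1$.

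\textbf{Head.} For $i\le k$ we have $\lambda_i\ge\lambda_k\ge C_1k^{-q}$, so
\[
(1-\eta\lambda_i)^{2t}\le \exp(-2\eta C_1 k^{-q}t).
\]
The head sum is then at most $\texttt{tr}[K]\exp(-2\eta C_1k^{-q}t)$. Summing the lower inequality of Condition~\ref{assum:residual_ntk_alignment}, as in the proof of the Corollary, gives $\texttt{tr}[K]\le 2\mathcal{L}_0/(c-\delta)$. Requiring
\[
\frac{c+\delta}{2}\cdot\frac{2\mathcal{L}_0}{c-\delta}\,e^{-2\eta C_1k^{-q}t}\le\frac{\varepsilon}{4}
\]
and solving for $t$ gives
\[
t\ge \frac{k^q}{2\eta C_1}\log\frac{4(c+\delta)\mathcal{L}_0}{(c-\delta)\varepsilon},
\]
which is precisely $T_\varepsilon$ after substituting $k^q$. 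Adding the three pieces, $\varepsilon/4+\varepsilon/4+\varepsilon/2$, yields $\mathcal{L}(\rvw_T)\le\varepsilon$ for all $T\ge T_\varepsilon$.

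\textbf{Main obstacle.} The delicate step is the discretization: the cutoff $k$ defined above is generally not an integer. I would take $k'=\lfloor k\rfloor$ (or $\lceil k\rceil$) and check the two directions separately. For the tail, $\sum_{i>k'}\lambda_i\le C_2\int_{k'}^n x^{-q}\,dx$ holds because the summand is decreasing. The subtlety is that lowering the cutoff slightly enlarges the tail integral while improving the head rate $\lambda_{k'}\ge C_1k^{-q}$. I expect this rounding to be absorbable, possibly at the cost of a harmless constant or by stating the bound with $k^q$ understood up to rounding. I would also need to justify $\eta\lambda_{\max}\le1$ from the step-size condition, which should follow from $\lambda_{\max}\le\beta$ for the square loss.
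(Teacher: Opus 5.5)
Your proposal is correct and is essentially the paper's own proof: the same cutoff $k=i^*_\varepsilon$, the same tail bound by integral comparison with $\lambda_i\le C_2 i^{-q}$, and the same head bound $(1-\eta C_1 k^{-q})^{2t}\,\texttt{tr}[K]$ with $\texttt{tr}[K]\le 2\mathcal{L}_0/(c-\delta)$ from Condition~\ref{assum:residual_ntk_alignment}. Your version is in fact more careful than the paper in three respects. The paper's tail integral runs to $\infty$ and so leaves an uncancelled $\frac{(c+\delta)C_2}{2(q-1)}n^{1-q}$, whereas your integral over $[k,n]$ cancels it. The paper also silently ignores both the rounding of $i^*_\varepsilon$ and the need for $\eta\lambda_{\max}\le 1$. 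On the last point, for $K=K(\rvw_0)$ this follows from $\lambda_{\max}\le\beta+\beta_f\sqrt{2\mathcal{L}_0}$ together with the step-size condition, not from $\lambda_{\max}\le\beta$ exactly.
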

\begin{proof}
Proof is deferred to Appendix \ref{sec_app:pf_time_complexity}.
\end{proof}
The above time complexity bound is of the following order:
\begin{equation}
    T_\varepsilon = \min \left\{O\left(\frac{1}{\eta \varepsilon^{1+\frac{1}{q-1}}} \log \left(\frac{1}{\varepsilon}\right)\right), O\left(\frac{n^q}{\eta} \log \left(\frac{1}{\varepsilon}\right)\right)\right\}.
\end{equation}
For comparison, the standard $\lambda_{\min}$-dependence convergence bound $O(\exp(-\eta\lambda_{\min}t))$ induces a time complexity of $O(\frac{1}{\eta\lambda_{\min}}\log \frac{1}{\varepsilon})$. As $\varepsilon$ is typically much larger than $\lambda_{\min}$ in practice (for example, one often sets $\varepsilon=10^{-4}$ or larger), the complexity obtained in the above theorem is tighter.

\section{Improving Generalization Theory}
\label{sec:improving_generalization_theory}
We show that the Label-NTK alignment also helps to improve the generalization bound for an over-parameterized neural network. Our analysis is based on the following established generalization bound:

\begin{proposition}[Informal version of Theorem 5.1 of \cite{arora2019finegrainedanalysisoptimizationgeneralization}]\label{prop:generalization}
Let network initialization scale $\kappa = O(\lambda_{\min}/n)$ and network width $m=\Omega(\kappa^{-2} \mathrm{poly}(\lambda_{\min}^{-1}))$. The test loss $\mathcal{L}_{test}$ of a two-layer ReLU network trained by gradient descent is upper bounded by 
\begin{equation}\label{eq:generalization_existing}
    \mathcal{L}_{test} \le \sqrt{2\rvy^\top K^{-1}\rvy/n}.
\end{equation}
\end{proposition}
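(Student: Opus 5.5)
Since this proposition is quoted from \cite{arora2019finegrainedanalysisoptimizationgeneralization}, the plan is to reproduce the structure of their argument: first control how far gradient descent moves the weights, then bound the Rademacher complexity of the set of networks within that distance of initialization, and finally apply a standard uniform-convergence bound for a Lipschitz, bounded loss. Throughout, write $W$ for the first-layer weights of the two-layer ReLU network, $W_0$ for their initialization, and $K$ for the infinite-width NTK Gram matrix on the $n$ training points. The output layer is fixed with $\pm 1/\sqrt{m}$ entries.

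\textbf{Step 1 (movement of weights).} In the regime $\kappa = O(\lambda_{\min}/n)$ and $m = \Omega(\kappa^{-2}\mathrm{poly}(\lambda_{\min}^{-1}))$, the empirical Gram matrix $H(t)$ stays within $O(\lambda_{\min})$ of $K$ in operator norm along the whole trajectory. Together with the smallness of $f(W_0)$ (which comes from the small $\kappa$), this gives the linearized residual dynamics $\rvf(W_t)-\rvy \approx -(I-\eta K)^t\rvy$, in the same spirit as Eq.~(\ref{eq:f-y_existing}).
\begin{itemize}
\item Summing the updates, $W_t - W_0 = \eta\sum_{s<t} J_s^\top (\rvy - \rvf(W_s))$.
\item Replacing $J_s$ by $J_0$ up to a controlled error gives $\mathrm{vec}(W_t-W_0)\approx J_0^\top \eta\sum_{s<t}(I-\eta K)^s\rvy$.
\item As $t\to\infty$ this tends to $J_0^\top K^{-1}\rvy$.
\item Its squared norm is $\rvy^\top K^{-1} J_0 J_0^\top K^{-1}\rvy \approx \rvy^\top K^{-1}\rvy$.
\end{itemize}
Hence $\|W_t - W_0\|_F \le \sqrt{\rvy^\top K^{-1}\rvy} + \epsilon'$, with $\epsilon'$ made small by the width condition. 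We also need each neuron's weight to move by at most $O(\sqrt{n}/(\sqrt{m}\lambda_{\min}))$, so that few ReLU activation patterns flip.

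\textbf{Step 2 (Rademacher complexity).} Consider the class $\mathcal{F}_B=\{f_W : \|W-W_0\|_F\le B,\ \text{per-neuron distance} \le R\}$.
\begin{itemize}
\item Linearize each $f_W$ around $W_0$, using that only a small fraction of activation patterns change. The error is $O(\sqrt{m}R^2/\kappa)$, which is negligible under the width condition.
\item The linear part is a linear predictor $\langle \mathrm{vec}(W-W_0), \nabla f_{W_0}(\rvx)\rangle$ with norm at most $B$.
\item Its empirical Rademacher complexity is at most $\frac{B}{n}\sqrt{\mathrm{tr}(J_0J_0^\top)} \le B\sqrt{\tfrac{1}{2n}}$, using $\|\nabla f_{W_0}(\rvx_i)\|^2\approx K_{ii}=1/2$ for unit-norm inputs.
\end{itemize}
With $B=\sqrt{\rvy^\top K^{-1}\rvy}$, the complexity is $\sqrt{\rvy^\top K^{-1}\rvy/(2n)}$. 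Note that $B$ is data-dependent, so we take a union bound over a geometric grid of values of $B$.

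\textbf{Step 3 (conclude).} Use a 1-Lipschitz loss clipped to $[0,1]$.
\begin{itemize}
\item Its training loss on $W_t$ is tiny for large $t$ by the convergence result (Proposition~\ref{thm:prior_opt}).
\item Talagrand's contraction lemma and the standard Rademacher generalization bound give a test loss of at most $2\cdot\sqrt{\rvy^\top K^{-1}\rvy/(2n)}$, plus $O(\sqrt{\log(n/\delta)/n})$ and vanishing terms.
\end{itemize}
This yields Eq.~(\ref{eq:generalization_existing}) in its informal form.

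\textbf{Main obstacle.} The delicate step is Step 1: showing that $\|W_t-W_0\|_F$ matches $\sqrt{\rvy^\top K^{-1}\rvy}$ to leading order rather than only up to a $\lambda_{\min}^{-1}$ factor. This requires tracking the perturbation $J_s - J_0$ and $H(t)-K$ uniformly in $t$, and bounding the accumulated error $\eta\sum_s \|J_s-J_0\|\,\|\rvf(W_s)-\rvy\|$ by $\mathrm{poly}(n,\lambda_{\min}^{-1})/\sqrt{m}$ terms. I would handle this with the activation-flip counting argument: the number of neurons whose ReLU pattern changes on any $\rvx_i$ is $O(mR/\kappa)$. That count is what forces the stated polynomial width requirement.
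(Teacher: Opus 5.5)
Your outline is correct and is essentially the proof of Theorem~5.1 in \cite{arora2019finegrainedanalysisoptimizationgeneralization}, which the paper imports without giving a proof of its own. Step~1 is their weight-movement lemma (per-neuron distance $R=O(\mathrm{poly}(n)/(\sqrt{m}\lambda_{\min}))$ and $\|W_t-W_0\|_F\le\sqrt{\rvy^\top K^{-1}\rvy}$ plus lower-order terms), Step~2 is their bound $B/\sqrt{2n}+O(\sqrt{m}R^2/\kappa)+\dots$ on the Rademacher complexity of networks within distance $(R,B)$ of initialization, and Step~3 is the union bound over $B$ combined with the standard Lipschitz-loss Rademacher bound. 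Two small calibrations: the activation-flip count gives Jacobian drift $\|J_s-J_0\|_F=O(\sqrt{nR/\kappa})$, i.e.\ of order $\mathrm{poly}/(m^{1/4}\kappa^{1/2})$ rather than $\mathrm{poly}/\sqrt{m}$, which is where the $\kappa^{-2}$ in the width requirement comes from; and the grid over $B$ should be arithmetic (e.g.\ $B_i=i$, which costs only an additive $O(1/\sqrt{n})$ after rounding up) or geometric with ratio $1+o(1)$, because a constant-ratio geometric grid would inflate the leading constant.
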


\begin{theorem}
Consider the same setting as in Theorem 5.1 of \cite{arora2019finegrainedanalysisoptimizationgeneralization}.  Suppose the Label-NTK alignment holds: $(\rvv_i^\top\rvy)^2 \sim \lambda_i^2$. Then, the test loss $\mathcal{L}_{test}$ of a two-layer ReLU network trained by gradient descent is upper bounded by 
\begin{equation}
    \mathcal{L}_{test} \le O(\mathrm{tr}[K]/n).
\end{equation}
\end{theorem}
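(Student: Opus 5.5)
The plan is to start from the existing generalization bound in Proposition~\ref{prop:generalization} and rewrite the data-dependent complexity term $\rvy^\top K^{-1}\rvy$ in the NTK eigenbasis. Since $K$ is symmetric positive definite, with eigenpairs $(\lambda_i,\rvv_i)_{i=1}^n$, we have
\begin{equation*}
    \rvy^\top K^{-1}\rvy = \sum_{i=1}^n \frac{(\rvv_i^\top \rvy)^2}{\lambda_i}.
\end{equation*}
This is the only place where the smallest eigenvalue can make the bound blow up. In the worst case, $\rvy$ concentrates on $\rvv_{\min}$ and the term is of order $\|\rvy\|^2/\lambda_{\min}$.

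Next we substitute the Label-NTK alignment. I would make it quantitative in the same way as Condition~\ref{assum:residual_ntk_alignment}: assume there exist constants $c'$ and $\delta'$ with $(\rvv_i^\top\rvy)^2 \le (c'+\delta')\lambda_i^2$ for all $i\in[n]$. Each summand is then at most $(c'+\delta')\lambda_i$, and summing gives
\begin{equation*}
    \rvy^\top K^{-1}\rvy \le (c'+\delta')\,\mathrm{tr}[K].
\end{equation*}
The $\lambda_{\min}^{-1}$ factor has disappeared. The full spectrum enters only through the trace.

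We then plug this into Eq.~(\ref{eq:generalization_existing}), which yields $\mathcal{L}_{test} \le \sqrt{2(c'+\delta')\,\mathrm{tr}[K]/n}$. This is where I expect the main obstacle: reconciling this with the claimed $O(\mathrm{tr}[K]/n)$ without a square root.

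I would handle it through the normalization of $K$ in the two-layer setting of \cite{arora2019finegrainedanalysisoptimizationgeneralization}. There the kernel has diagonal entries $\mathcal{K}(\rvx_i,\rvx_i)=O(1)$ for unit-norm inputs, so $\mathrm{tr}[K]/n = O(1)$. Consequently $\sqrt{\mathrm{tr}[K]/n}$ and $\mathrm{tr}[K]/n$ agree up to constants, and the stated form follows after absorbing constants into $O(\cdot)$.

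If the intended statement is instead the sharper one, the fallback is to keep the square root and note that the $O(\cdot)$ claim holds in the regime $\mathrm{tr}[K]/n=\Theta(1)$. Alternatively, one could apply the bound to the squared test loss, as in the original Theorem 5.1, which bounds the population loss by $\sqrt{2\rvy^\top K^{-1}\rvy/n}$ plus lower-order terms. In either case, the remaining hypotheses on width, initialization scale $\kappa$, and the probability terms are inherited verbatim from Proposition~\ref{prop:generalization}, so no new optimization argument is needed.
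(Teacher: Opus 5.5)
Your main chain is the paper's proof. You expand $\rvy^\top K^{-1}\rvy=\sum_i(\rvv_i^\top\rvy)^2/\lambda_i$, insert the alignment to get $c\,\mathrm{tr}[K]$, and plug into Proposition~\ref{prop:generalization}. Using only the upper half of the alignment is slightly cleaner than the paper's equality $(\rvv_i^\top\rvy)^2=c\lambda_i^2$. You are also right that this yields $\sqrt{2c\,\mathrm{tr}[K]/n}$, not $O(\mathrm{tr}[K]/n)$. The paper's proof only says ``the result directly follows'' and never addresses the square root, so the reconciliation step is entirely yours.

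As written, that step runs the wrong way. From $\mathrm{tr}[K]/n=O(1)$ you only get $\mathrm{tr}[K]/n=O\bigl(\sqrt{\mathrm{tr}[K]/n}\bigr)$. Removing the root needs $\sqrt{x}\le Cx$, i.e.\ a \emph{lower} bound $x\ge C^{-2}$. That lower bound does hold in Arora et al.'s two-layer ReLU setting: there $K_{ii}=\|\rvx_i\|^2/2=1/2$ exactly, so $\mathrm{tr}[K]/n=1/2$. State that instead.

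Note what this implies. The bound becomes $\sqrt{c}$ plus lower-order terms, so ``$O(\mathrm{tr}[K]/n)$'' is really $O(1)$ with the alignment constant hidden. The substantive gain is only the removal of $\lambda_{\min}^{-1}$ from the leading term. If you keep track of $c$, the root is not cosmetic: what follows is $\sqrt{c}$, not $O(c)$.

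Drop the squared-loss fallback. Squaring bounds $\mathcal{L}_{test}^2$ rather than $\mathcal{L}_{test}$. Also, Theorem~5.1 of Arora et al.\ is stated for a $[0,1]$-valued $1$-Lipschitz loss, not for the squared loss.
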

\begin{proof}
    \begin{equation}
        \rvy^\top K^{-1}\rvy = \sum_{i=1}^n \lambda_i^{-1}(\rvv_i^\top\rvy)^2 = \sum_{i=1}^n \lambda_i^{-1} (c\cdot\lambda_i^2) = c\sum_{i=1}^n\lambda_i = c\cdot\mathrm{tr}[K],
    \end{equation}
    where $c$ is a certain constant. The result directly follows.
\end{proof}

\section{Conclusion and Limitations}
\label{sec:conclusion_and_limitations}
In this paper, we experimentally observed, and theoretically justified, strong correlations between the training label vector $\rvy$ and the NTK eigen-components: Label-NTK alignment $(\rvv_i^\top\rvy)^2 \sim \lambda_i^2$ and Residual-NTK alignment $(\rvv_i^\top\rvy)^2 \sim \lambda_i$. These alignments show that, even at initialization, the data label has less (or ignorable) correlation with  NTK eigen-components of lower eigenvalues. This finding enabled us to develop a convergence bound that is much tighter than previous results in the NTK regime and explains the fast convergence observed in practice.  

\noindent{\bf Limitation.} The theory of this paper is limited to the NTK regime, where the neural network is sufficiently wide, and the learning rate is relatively low so that feature learning dynamics, such as Edge of Stability, do not show up. Despite the limitations of the NTK regime, we argue that it is still worth developing theory in this regime: The feature learning regime is still an open area lacking well-established convergence analyses, whereas the NTK regime represents one of the most mature and mathematically tractable frameworks available. As such, it continues to offer valuable insight into the behavior of modern over-parameterized neural networks and serves as a foundation for future theoretical advances beyond its assumptions.

\section*{Acknowledgement}
This research used both the DeltaAI advanced computing and data resource, which is supported by the National Science Foundation (award OAC 2320345) and the State of Illinois, and the Delta advanced computing and data resource which is supported by the National Science Foundation (award OAC 2005572) and the State of Illinois. Delta and DeltaAI are joint efforts of the University of Illinois Urbana-Champaign and its National Center for Supercomputing Applications.

\bibliographystyle{plain}
\bibliography{references}

\newpage
\appendix

\section{Proofs}
\label{sec_app:omitted_proofs}
\subsection{Proof of Theorem \ref{thm:n_points_pair_projection_spectrum}}
\label{sec_app:pf_multi_data}
    Without loss of generality, we assume that $i=1$ and $j=2$; namely, the angle $\theta$ between the pair $\rvx_1$ and $\rvx_2$ is small $\theta\ll 1$. 

    We consider the normalized NTK matrix $\bar{K} = \frac{1}{(L+1)^2}K$. As has been shown in \cite{liu2025better}, $\frac{1}{(L+1)^2}\|\nabla f(\rvx_i)\| = \|\rvx_i\| = 1$ for all $\rvx_i\in\mathcal{D}$. Therefore, all the diagonal entries of $\bar{K}$ are $1$, and each off-diagonal entry $\bar{K}_{ij}$, $i\ne j$, is $\cos \phi_{ij}$ with $\phi_{ij}$ being the angle between the two tangent vectors $\nabla f(\rvx_i)$ and $\nabla f(\rvx_j)$.

We write the normalized NTK matrix $\bar{K}$ in the following form:
\begin{equation}
    \bar{K}=
\begin{pmatrix}
1 & \cos \phi_{12} & \rva^\top\\
\cos \phi_{12} & 1 & \rvb^\top\\
\rva & \rvb & B
\end{pmatrix},
\end{equation}
where vectors $\rva,\rvb\in \mathbb{R}^{n-2}$ and matrix $B\in\mathbb{R}^{(n-2)\times (n-2)}$.  By Theorem 3.1 of \cite{dugradient}, the matrix $B$ is positive-definite and invertible. 

By Corollary of~\cite{liu2025better}, we get 
\begin{equation*}
    \cos \phi_{12} = (1-a\theta + o(\theta))\cos \theta, ~~ \textrm{with~} a = \frac{L}{2\pi}.
\end{equation*}
In addition, considering components of $\rva$ and $\rvb$, we have, for each $i\in [n-2]$,
\begin{equation*}
    |a_i-b_i| = |\cos \phi_{1(i+2)}-\cos \phi_{2(i+2)}| = O(\theta_{1(i+2})-\theta_{2(i+2)}) = O(\theta).
\end{equation*}
In brief, we can write 
\begin{equation}
    \bar{K}=
\begin{pmatrix}
1 & 1-a\theta +o(\theta) & \rva^\top\\
1-a\theta +o(\theta) & 1 & \rva^\top+ O(\theta)\\
\rva & \rva + O(\theta) & B
\end{pmatrix}.
\end{equation}
We can see that it has the following eigenvalue $\lambda$ and corresponding eigenvector $\rvv$:
\begin{equation*}
    \lambda = a\theta +o(\theta), ~~~ \rvv = \frac{1}{\sqrt{2}}(1, ~ -1, ~ -B^{-1}(\rva-\rvb)) + o(\theta).
\end{equation*}
Hence, 
\begin{equation*}
    \rvv^\top \rvy = \frac{1}{\sqrt{2}}(y_1-y_2 - \tilde{\rvy}^\top B^{-1}(\rva-\rvb)) + o(\theta).
\end{equation*}
Note that $\rva-\rvb = O(\theta)$. Additionally, by Assumption ~\ref{assum:lipschitz_continuity}, we also have 
$|y_1-y_2| \le L_y\|\rvx_1-\rvx_2\| = O(\theta)$. Therefore, we obtain
\begin{equation}
    (\rvv^\top \rvy)^2 = O(\theta^2), ~~ \textrm{and~} \lambda = O(\theta). 
\end{equation}
We complete the proof. 

\subsection{Proof of Theorem \ref{thm:loss_bounds}}\label{sec_app:pf_loss_bounds}
We start the proof with the following key lemma (Proof of lemma is deferred to Appendix \ref{sec_app:pf_lemma_residual_t}).
\begin{lemma}
\label{lem:residual_at_time_t}
Given any $\varepsilon \in (0,1)$, if the model Hessian spectral norm $\|\nabla^2f\|_{op} \le \frac{\lambda_{\min}\varepsilon}{32 \beta R \sqrt{n}\mathcal{L}(\rvw_0)}$ holds within the ball $B_R(\rvw_0)$ with $R = \frac{8 \sqrt{\beta\mathcal{L}(\rvw_0)}}{\lambda_{\min}}$ and learning rate $\eta\le \frac{1}{2\beta+\beta_f\sqrt{2\mathcal{L}(\rvw_0)}}$, then 
\begin{equation}\label{eq:linear_dynamic}
    \rvf(\rvw_t)-\rvy = (I-\eta K)^t(\rvf(\rvw_0)-\rvy) + \Delta_t,
\end{equation}
with $\|\Delta_t\| \le \frac{\varepsilon}{4\sqrt{\mathcal{L}(\rvw_0)}}$.
\end{lemma}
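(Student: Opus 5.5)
We will compare the actual gradient descent trajectory with the linearized dynamics driven by the fixed infinite-width NTK $K$, and control the accumulated deviation by a discrete Grönwall-type argument. Write $\rvr_t=\rvf(\rvw_t)-\rvy$.

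\textbf{Step 1: exact one-step recursion.} The update is $\rvw_{t+1}=\rvw_t-\eta J(\rvw_t)^\top \rvr_t$. By Taylor's theorem with integral remainder,
\[
\rvr_{t+1}=\rvr_t-\eta \tilde J_t J(\rvw_t)^\top \rvr_t ,
\]
where $\tilde J_t=\int_0^1 J(\rvw_t+s(\rvw_{t+1}-\rvw_t))\,ds$. Rewriting,
\[
\rvr_{t+1}=(I-\eta K)\rvr_t+\eta E_t\rvr_t, \qquad E_t=K-\tilde J_t J(\rvw_t)^\top .
\]
Unrolling gives $\rvr_t=(I-\eta K)^t\rvr_0+\Delta_t$ with
\[
\Delta_t=\eta\sum_{s<t}(I-\eta K)^{t-1-s}E_s\rvr_s .
\]

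\textbf{Step 2: stay in the ball.} We use the standard PL-based argument (as in Proposition~\ref{thm:prior_opt}, Theorem 8 of \cite{liu2022loss}). The Hessian bound in $B_R(\rvw_0)$ keeps $K(\rvw)$ within $\lambda_{\min}/2$ of $K$, which gives the PL condition with constant $\lambda_{\min}/2$. Together with the step size condition, this yields:
\begin{itemize}
\item $\mathcal{L}(\rvw_t)\le(1-\eta\lambda_{\min}/2)^t\mathcal{L}(\rvw_0)$;
\item $\sum_t\|\rvw_{t+1}-\rvw_t\|\le R$.
\end{itemize}
This is the usual telescoping bound: $\eta\|\nabla\mathcal{L}\|\le\sqrt{\beta}\cdot\eta\sqrt{\mathcal{L}_t}$, summed as a geometric series, with the constant $8$ absorbing the factors. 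Hence all iterates and the segments between them lie in $B_R(\rvw_0)$. In addition, $\|\rvr_s\|\le\sqrt{2\mathcal{L}(\rvw_0)}(1-\eta\lambda_{\min}/2)^{s/2}$.

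\textbf{Step 3: bound $E_s$.} Since $\|J(\rvw)-J(\rvw_0)\|_{op}\le\sqrt n\sup\|\nabla^2 f\|_{op}\,R$ on the ball, and $J(\rvw_0)J(\rvw_0)^\top$ is within the same order of $K$ (the width condition in Theorem~\ref{thm:loss_bounds} makes this the source of $\varepsilon$), we get
\[
\|E_s\|_{op}\lesssim\sqrt{\beta}\,\sqrt n\,R\,\sup\|\nabla^2 f\|_{op}\le\frac{\lambda_{\min}\varepsilon}{32\,\mathcal{L}(\rvw_0)}\cdot O(1).
\]
Here we use $\|J\|_{op}\le\sqrt{\beta}$, which follows from $\beta$-smoothness of the loss.

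\textbf{Step 4: sum the errors.} With $\|I-\eta K\|_{op}\le1$ (since $\eta\le 1/(2\beta)$ and $\lambda_{\max}\le\beta$),
\begin{align*}
\|\Delta_t\|&\le\eta\sum_s\|E_s\|\,\|\rvr_s\|\\
&\le\eta\,\frac{\lambda_{\min}\varepsilon}{32\,\mathcal{L}_0}\sqrt{2\mathcal{L}_0}\sum_s(1-\eta\lambda_{\min}/2)^{s/2}.
\end{align*}
The geometric sum is at most $4/(\eta\lambda_{\min})$. This gives
\[
\|\Delta_t\|\le\frac{\sqrt2\,\varepsilon}{8\sqrt{\mathcal{L}_0}}\cdot O(1),
\]
and the constants $32$ and $8$ are arranged so that this is at most $\varepsilon/(4\sqrt{\mathcal{L}(\rvw_0)})$.

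\textbf{Main obstacle.} The delicate step is Step 3: bounding $\|K-\tilde J_tJ(\rvw_t)^\top\|$ uniformly along the trajectory, i.e., relating the infinite-width $K$ to $J(\rvw_0)J(\rvw_0)^\top$ and tracking the $\sqrt n$ and $\sqrt\beta$ factors precisely enough to match the stated constants. I would first treat $K$ as the initial NTK $J(\rvw_0)J(\rvw_0)^\top$, where only the Hessian bound is needed, and then handle the finite-width discrepancy as an additional error covered by the width requirement.
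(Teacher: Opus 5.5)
Your proposal is correct and is essentially the paper's proof: you compare a one-step Taylor expansion with the fixed-kernel linear dynamics, invoke the PL-based stay-in-the-ball and loss-decay facts from \cite{liu2022loss}, convert the Hessian bound into a Jacobian drift of size $\sqrt{n}\,R\,\sup\|\nabla^2 f\|_{op}$ via the Frobenius norm together with a $\beta$-bound on $\|J\|_{op}$, and finish with an unrolled geometric sum; your plan to read $K$ as $J(\rvw_0)J(\rvw_0)^\top$ is exactly what the paper does (its unrolled formula is written with $(I-\eta K(\rvw_0))^t$), so no width argument is needed inside the lemma. Packaging the NTK drift and the second-order remainder into one averaged-Jacobian error $E_t$ is a harmless variant, but the constant only comes out if you use the contraction $\|I-\eta K\|_{op}\le 1-\eta\lambda_{\min}$ with $\|\rvr_s\|\le\|\rvr_0\|$ as the paper does, which gives $\|\Delta_t\|\le\sqrt{2}\,\varepsilon/(16\sqrt{\beta\mathcal{L}(\rvw_0)})$; your route instead uses $\|I-\eta K\|_{op}\le 1$, residual decay summing to $4/(\eta\lambda_{\min})$, and the factor $2$ that $\|J_0J_0^\top-\tilde J_sJ_s^\top\|_{op}\le\|J_0-\tilde J_s\|_{op}\|J_0\|_{op}+\|\tilde J_s\|_{op}\|J_0-J_s\|_{op}$ genuinely produces (with $J_s=J(\rvw_s)$), so it only yields $\sqrt{2}\,\varepsilon/(4\sqrt{\beta\mathcal{L}(\rvw_0)})$, which exceeds $\varepsilon/(4\sqrt{\mathcal{L}(\rvw_0)})$ when $\beta<2$.
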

Prior work, Theorem 3.2 of \cite{liu2020linearity}, has proven that the model Hessian norm can be arbitrarily small given a large enough $m$: $\|\nabla^2f\|_{op} = \tilde{O}\left(\frac{R^{3L}}{\sqrt{m}}\right)$. Therefore, the conditions of the lemma are met by the neural network set in the theorem. 

Taking square on both sides of Eq. (\ref{eq:linear_dynamic}), we get
\begin{align*}
    \mathcal{L}(\rvw_t) &= \frac{1}{2}\|\rvf(\rvw_t)-\rvy\|^2 = \frac{1}{2}\|(I-\eta K)^t\rvr_0\|^2 + ((I-\eta K)^t\rvr_0)^\top \Delta_t + \|\Delta_t\|^2.
\end{align*}

On the other hand, multiplying a factor $(1-\eta \lambda_i)^{2t}$  onto both sizes of Eq. (\ref{eq:res_ntk_alignment}) in Assumption \ref{assum:residual_ntk_alignment}
and summing over $i$ gives
\begin{equation*}
    (c-\delta)\sum_{i=1}^n (1-\eta \lambda_i)^{2t}\lambda_i
\le
\sum_{i=1}^n (1-\eta \lambda_i)^{2t} (\rvv_i^\top \rvr_0)^2
\le
(c+\delta)\sum_{i=1}^n (1-\eta \lambda_i)^{2t}\lambda_i
\end{equation*}

Note that $\|(I-\eta K)^t\rvr_0\|^2=\sum_{i=1}^n (1-\eta \lambda_i)^{2t} (\rvv_i^\top \rvr_0)^2$ and $\sum_{i=1}^n (1-\eta \lambda_i)^{2t}\lambda_i = \texttt{tr}[(I-\eta K)^{2t}K]$, then we obtain
\begin{align*}
    (c-\delta) \texttt{tr}[(I-\eta K)^{2t}K]\le \|(I-\eta K)^t\rvr_0\|^2 \le (c+\delta) \texttt{tr}[(I-\eta K)^{2t}K].
\end{align*}

In addition, we have 
\begin{align*}
|((I-\eta K)^t\rvr_0)^\top \Delta_t| &\le \|(I-\eta K)^t\rvr_0\| \cdot \|\Delta_t\| \le \|\rvr_0\|\cdot \|\Delta_t\|\le \frac{\varepsilon}{4},
\end{align*}
and $\|\Delta_t\|^2 \le \frac{\varepsilon^2}{16\mathcal{L}(\rvw_0)}\le \frac{\varepsilon}{4}$.
Combining all the above, we finally get the desired result.

\subsection{Proof of Lemma \ref{lem:residual_at_time_t}}
\label{sec_app:pf_lemma_residual_t}
Using Taylor expansion with Lagrange remainder term, as well as the GD update rule $\rvw_{t+1}-\rvw_t = -\eta J(\rvw_t)^\top(\rvf(\rvw_t)-\rvy)$, we have the following:
\begin{align}
\rvf(\rvw_{t+1})-\rvf(\rvw_{t}) &= -\eta K(\rvw_t) (\rvf(\rvw_t)-\rvy) + \boldsymbol{\zeta}_t \nonumber\\
&= -\eta K(\rvw_0) (\rvf(\rvw_t)-\rvy) - \eta\delta K_t (\rvf(\rvw_t)-\rvy) + \boldsymbol{\zeta}_t \label{eq-append:f_t+1-f_t}
\end{align}
where $\delta K_t = K(\rvw_t)-K(\rvw_0)$ and vector $\boldsymbol{\zeta}_t\in\mathbb{R}^n$ has elements
\begin{equation}\label{eq-append:zeta}
    \zeta_{t,i} = \eta^2 (\rvf(\rvw_t)-\rvy)^\top J(\rvw_t)\nabla^2 f(\xi_{t}(\rvx_i); \rvx_i)J(\rvw_t)^\top(\rvf(\rvw_t)-\rvy), ~~ \forall i\in[n],
\end{equation}
where $\xi(\rvx_i)$ is some point on the line segment between $\rvw_t$ and $\rvw_{t+1}$ depending on $\rvx_i$.

Note that, under the setting of the model Hessian spectral norm $\|\nabla^2f\|_{op} \le \frac{\lambda_{\min}\varepsilon}{32 \beta R \sqrt{n}\mathcal{L}(\rvw_0)}$,  $R = \frac{8 \sqrt{\beta\mathcal{L}(\rvw_0)}}{\lambda_{\min}}$ and learning rate $\eta\le \frac{1}{2\beta+\beta_f\sqrt{2\mathcal{L}(\rvw_0)}}$, \cite{liu2022loss} has proven that:
\begin{itemize}
    \item $\rvw_t$ remains in the ball $B_R(\rvw_0)$ for all $t\ge 0$.
    \item $\mathcal{L}(\rvw_t) = \frac{1}{2}\|\rvf(\rvw_t)-\rvy\|^2 \le \left(1-\eta \frac{\lambda_{\min}}{2}\right)^t\frac{1}{2}\|\rvf(\rvw_0)-\rvy\|^2$.
\end{itemize}

Now, let's find the upper bounds for the magnitudes of the two terms: $\eta\delta K_t (\rvf(\rvw_t)-\rvy)$ and $\boldsymbol{\zeta}_t$.

{\it Bounding term $\eta\delta K_t (\rvf(\rvw_t)-\rvy)$.} For all $i\in [n]$, we have
\begin{equation*}
    \nabla_\rvw f(\rvw_t; \rvx_i)= \nabla_\rvw f(\rvw_0; \rvx_i) + \int_0^1 \nabla^2 f(\rvw_0+\tau(\rvw_t-\rvw_0; \rvx_i)(\rvw_t-\rvw_0)d\tau.
\end{equation*}
Then we have
\begin{align*}
\|\nabla_\rvw f(\rvw_t; \rvx_i)- \nabla_\rvw f(\rvw_0; \rvx_i)\| &\le \sup_{\tau\in[0,1]} \|\nabla^2 f(\rvw_0+\tau(\rvw_t-\rvw_0;\rvx_i))\|_{op} \cdot \|\rvw_t-\rvw_0\|\\& \le \frac{\lambda_{\min}\varepsilon}{32 \beta R \sqrt{n}\mathcal{L}(\rvw_0)} \cdot R \\& = \frac{\lambda_{\min}\varepsilon}{32\beta \sqrt{n}\mathcal{L}(\rvw_0)}.
\end{align*}

Hence, 
\begin{align*}
\|J(\rvw_t)-J(\rvw_0)\|_{op} &\le \|J(\rvw_t)-J(\rvw_0)\|_{F} \\
& = \sqrt{\sum_{i\in[n]} \|\nabla_\rvw f(\rvw_t; \rvx_i)- \nabla_\rvw f(\rvw_0; \rvx_i)\|^2}\\
&\le \frac{\lambda_{\min}\varepsilon}{32\beta {\mathcal{L}(\rvw_0)}}.
\end{align*}

Now, the spectral norm of the NTK change $\delta K_t$ is bounded by
\begin{align*}
\|\delta K_t\|_{op} &= \|J(\rvw_t)J(\rvw_t)^\top-J(\rvw_0)J(\rvw_0)^\top\|_{op} \\
& = \|J(\rvw_t)(J(\rvw_t)-J(\rvw_0))^\top + (J(\rvw_t)-J(\rvw_0))J(\rvw_0)^\top\|_{op}\\
&\le (\|J(\rvw_t)\|_{op} + \|J(\rvw_0)\|_{op})\|J(\rvw_t)-J(\rvw_0)\|_{op} \\
&\le  \frac{\lambda_{\min}\varepsilon}{8 {\mathcal{L}(\rvw_0)}}.
\end{align*}
In the last inequality above, we use the fact that $\|J(\rvw)\|_{op}^2 = \|K(\rvw)\|_{op}$.

Therefore, the term $\eta\delta K_t (\rvf(\rvw_t)-\rvy)$ is bounded by
\begin{align*}
    \|\eta\delta K_t (\rvf(\rvw_t)-\rvy)\| \le \frac{\eta\lambda_{\min}\varepsilon}{8 {\mathcal{L}(\rvw_0)}}\|\rvf(\rvw_t)-\rvy\|.
\end{align*}

{\it Bounding term $\boldsymbol{\zeta}_t$.} 
By Eq. (\ref{eq-append:zeta}), we have, for all $t>0, i\in[n]$,
\begin{align*}
|\zeta_{t,i}| &\le \eta^2 \|\rvf(\rvw_t)-\rvy\|^2 \|K(\rvw_t)\|_{op} \cdot \|\nabla^2 f(\xi_t(\rvx_i);\rvx_i)\|_{op}\\
&\le \eta^2 \frac{\lambda_{\min}\varepsilon}{16 R \sqrt{n}\mathcal{L}(\rvw_0)} \|\rvf(\rvw_t)-\rvy\|^2.
\end{align*}
Then, we have
\begin{align*}
   \|\boldsymbol{\zeta}_{t}\| = \sqrt{\sum_{i\in[n]}|\zeta_{t,i}|^2} \le \eta^2 \frac{\lambda_{\min}\varepsilon}{16 R {\mathcal{L}(\rvw_0)}} \|\rvf(\rvw_t)-\rvy\|^2.
\end{align*}

Finally, applying Eq. (\ref{eq-append:f_t+1-f_t}) recursively, we get
\begin{align*}
\rvf(\rvw_t)-\rvy &= (I-\eta K(\rvw_0))^t(\rvf(\rvw_0)-\rvy) + \sum_{\tau=0}^{t-1}(I-\eta K(\rvw_0))^{t-1-\tau} \left(\eta\delta K_\tau (\rvf(\rvw_\tau)-\rvy) + \boldsymbol{\zeta}_\tau\right)\\
& \triangleq (I-\eta K(\rvw_0))^t(\rvf(\rvw_0)-\rvy) + \Delta_t.
\end{align*}
We can see that the first term $(I-\eta K(\rvw_0))^t(\rvf(\rvw_0)-\rvy)$ is equivalent to a linear dynamics. The deviation term (deviating from linear dynamics) can be bounded by 
\begin{align*}
 \|\Delta_t\| &\le \sum_{\tau=0}^{t-1}\|I-\eta K(\rvw_0)\|_{op}^{t-1-\tau} \left(\|\eta\delta K_\tau (\rvf(\rvw_\tau)-\rvy)\| + \|\boldsymbol{\zeta}_\tau\|\right)\\
 &\le \sum_{\tau=0}^{t-1}(1-\eta \lambda_{\min})^{t-1-\tau}\cdot \frac{\eta\lambda_{\min}\varepsilon}{4{\mathcal{L}(\rvw_0)}}\|\rvf(\rvw_t)-\rvy\| \\
 & \le \frac{\eta\lambda_{\min}\varepsilon}{4\sqrt{\mathcal{L}(\rvw_0)}}\sum_{\tau=0}^{t-1}(1-\eta \lambda_{\min})^{t-1-\tau} \\
 &\le  \frac{\varepsilon}{4\sqrt{\mathcal{L}(\rvw_0)}}.
\end{align*}
Above we used $\|\rvf(\rvw_t)-\rvy\|^2 \le \left(1-\eta \frac{\lambda_{\min}}{2}\right)^t\|\rvf(\rvw_0)-\rvy\|^2 \le \|\rvf(\rvw_0)-\rvy\|^2 = \mathcal{L}(\rvw_0)$ for the third inequality, and $\sum_t q^t \le 1/(1-q)$ for $q<1$ for the fourth inequality.

\subsection{Proof of Theorem \ref{thm:time_complexity}}\label{sec_app:pf_time_complexity}
According to Theorem \ref{thm:loss_bounds}, we have 
\begin{equation}
    \mathcal{L}(\rvw_t) \le \frac{c+\delta}{2}\texttt{tr}[(I-\eta K)^{2t}K] + \frac{\varepsilon}{2} = \frac{c+\delta}{2}\sum_{i=1}^n(1-\eta \lambda_i)^{2t}\lambda_i + \frac{\varepsilon}{2}.
    \label{eq:pf_bound_1}
\end{equation}
By Proposition \ref{prop:eigenvalue_power_law_decay}, we have
$\lambda_i \le C_2 i^{-q}.$ Define 
\begin{equation*}
    i^*_\varepsilon = \left(\frac{(q-1)\varepsilon}{2C_2(c+\delta)} + n^{1-q}\right)^{\frac{1}{1-q}}, 
\end{equation*}
Then we have
\begin{align*}
    \frac{c+\delta}{2}\sum_{i=i^*_\varepsilon+1}^n(1-\eta \lambda_i)^{2t}\lambda_i &\le \frac{c+\delta}{2}\sum_{i=i^*_\varepsilon+1}^n\lambda_i \\
    &\le \frac{c+\delta}{2}\sum_{i=i^*_\varepsilon+1}^nC_2 i^{-q} \\
    &\le \frac{c+\delta}{2}\int_{i^*_\varepsilon}^\infty C_2 z^{-q} dz\\
    &= \frac{(c+\delta)C_2}{2} \frac{(i^*_\varepsilon)^{1-q}}{q-1} \\
    &\le \frac{\varepsilon}{4}.
\end{align*}
Then Eq. (\ref{eq:pf_bound_1}) reduces to 
\begin{equation}
    \mathcal{L}(\rvw_t) \le \frac{c+\delta}{2}\sum_{i=i^*_\varepsilon}^n(1-\eta \lambda_i)^{2t}\lambda_i + \frac{3\varepsilon}{4}.
    \label{eq:pf_bound_2}
\end{equation}
Now consider the term $\sum_{i=i^*_\varepsilon}^n(1-\eta \lambda_i)^{2t}\lambda_i$. Denote $\lambda_{i=i^*_\varepsilon}$ as $\lambda_*$. We have
\begin{align*}
    \sum_{i=i^*_\varepsilon}^n(1-\eta \lambda_i)^{2t}\lambda_i &\le (1-\eta \lambda_*)^{2t}\sum_{i=i^*_\varepsilon}^n \lambda_i \\
    &\le (1-\eta \lambda_*)^{2t}\sum_{i=1}^n \lambda_i \\
    &\le (1-\eta C_1 (i^*_\varepsilon)^{-q})^{2t} \cdot \frac{2\mathcal{L}_0}{c-\delta}.
\end{align*}
In the last inequality above, we used $\lambda_* \ge C_1(i^*_\varepsilon)^{-q}$ (Assumption \ref{prop:eigenvalue_power_law_decay}) and Condition \ref{assum:residual_ntk_alignment}.
When 
\begin{equation*}
    t \ge \frac{1}{2\eta C_1} \left(\frac{(q-1)\varepsilon}{2(c+\delta)C_2} + n^{1-q}\right)^{\frac{q}{1-q}}
    \log\left(\frac{4(c+\delta)\mathcal{L}_0}{(c-\delta)\varepsilon}\right),
\end{equation*}
we can bound $\mathcal{L}(\rvw_t) \le \varepsilon$. Hence, we finish the proof.

\begin{figure}[t]
    \centering
    \begin{subfigure}[t]{0.47\linewidth}
        \centering
        \includegraphics[width=\linewidth]{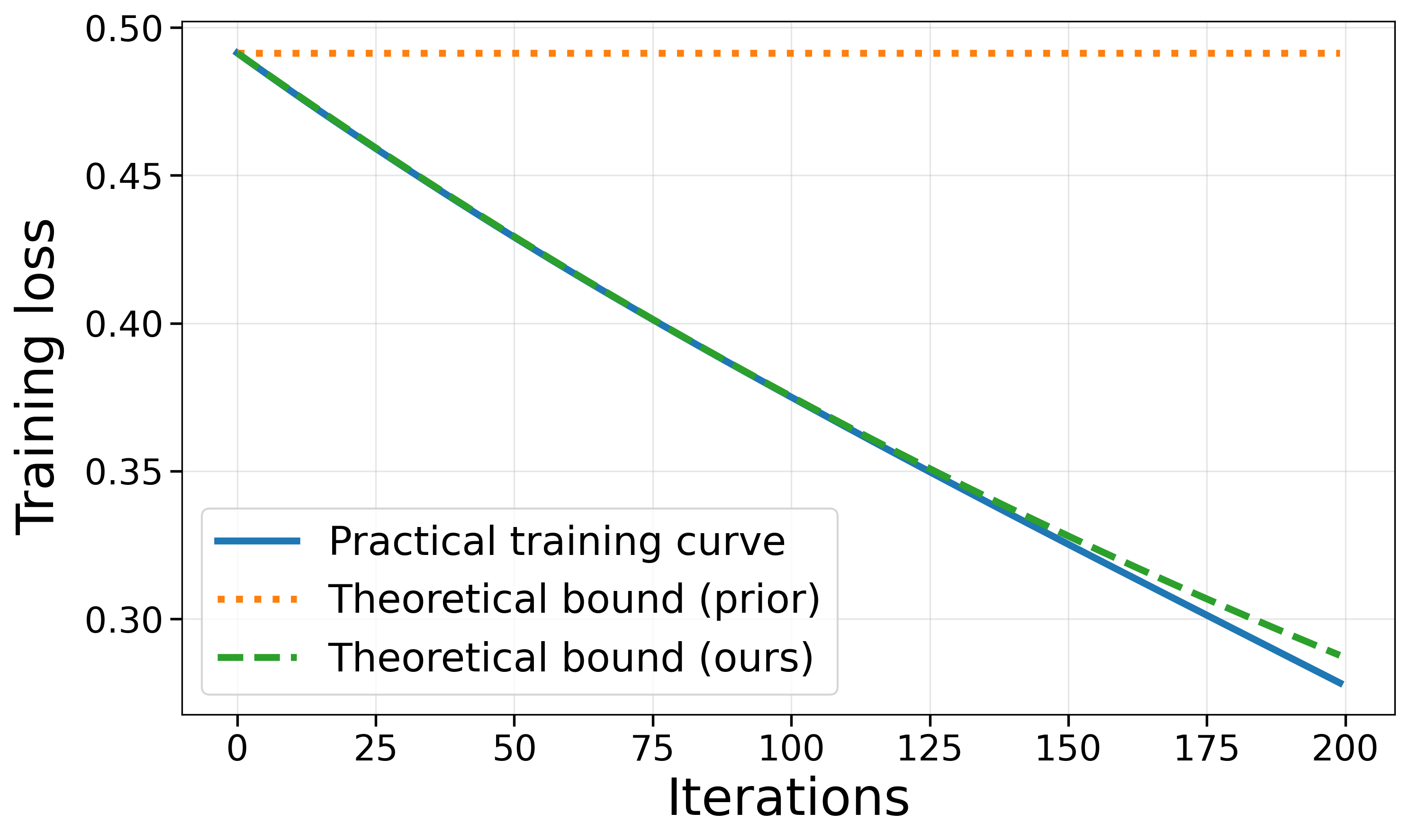}
        \caption{Tiny-ImageNet10, CNN}
        \label{fig:tinyimagenet10_cnn_convergence}
    \end{subfigure}
    \hfill
    \begin{subfigure}[t]{0.47\linewidth}
        \centering
        \includegraphics[width=\linewidth]{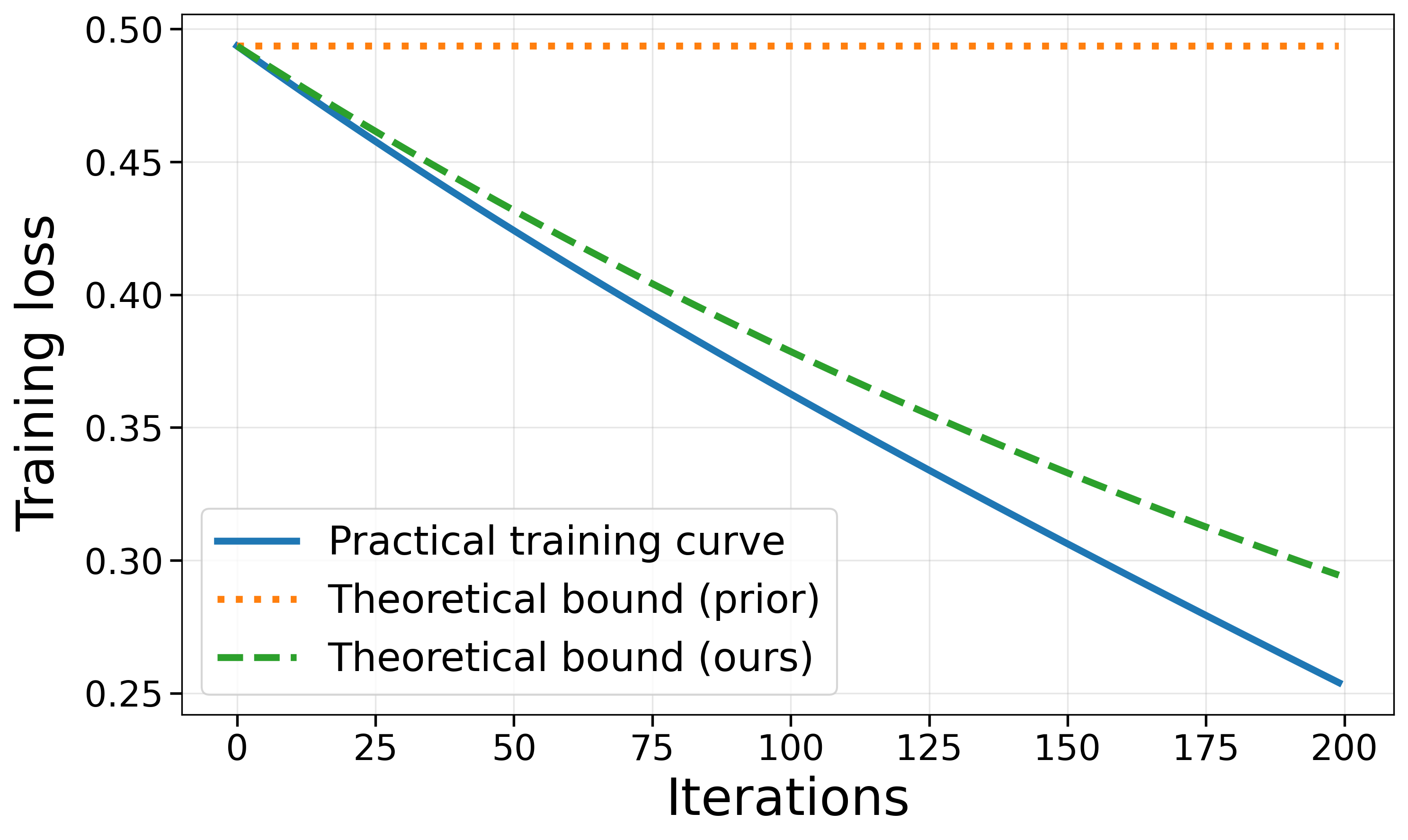}
        \caption{CIFAR-10, MLP}
        \label{fig:cifar10_mlp_convergence}
    \end{subfigure}

    \vspace{0.4em}

    \begin{subfigure}[t]{0.47\linewidth}
        \centering
        \includegraphics[width=\linewidth]{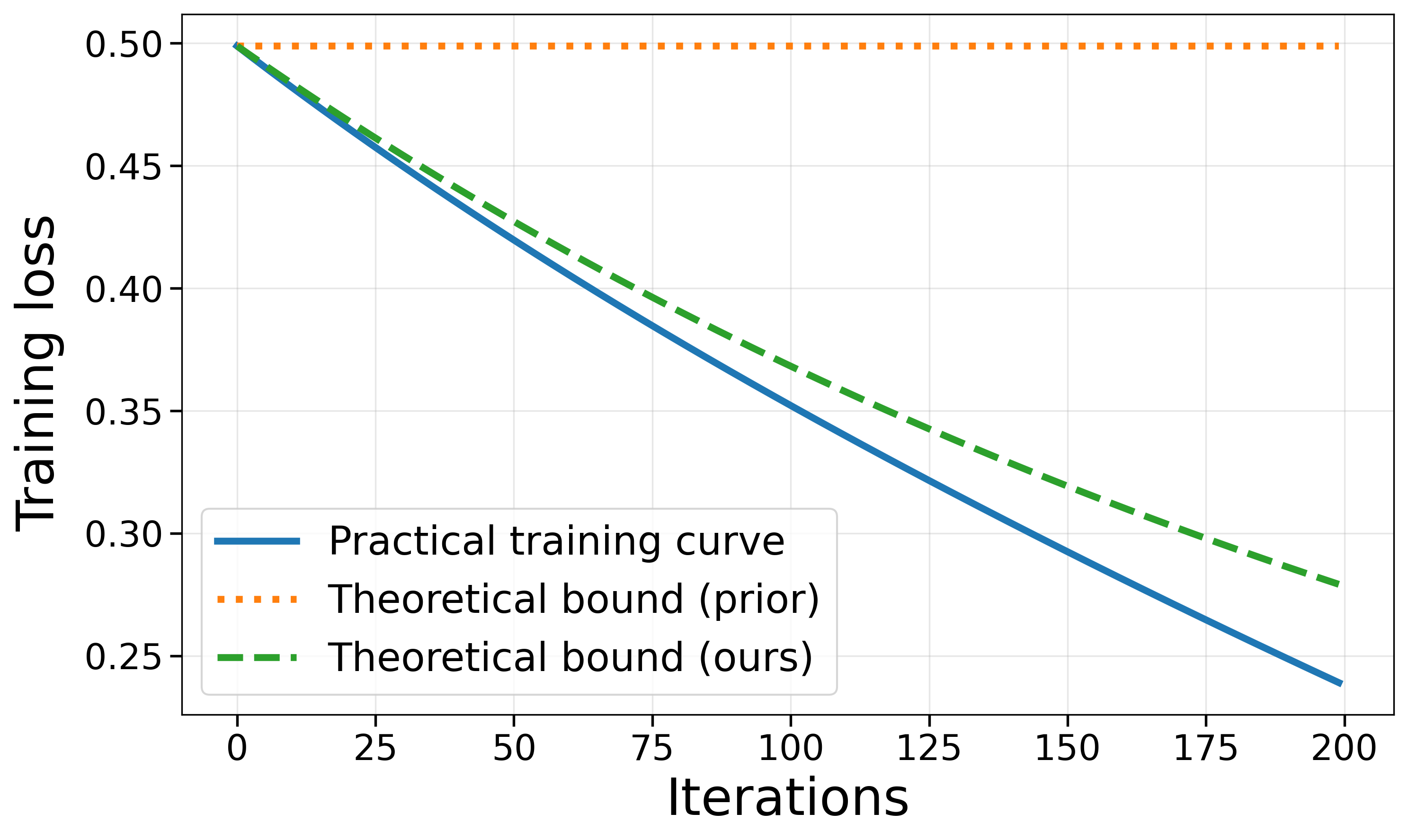}
        \caption{SVHN, MLP}
        \label{fig:svhn_mlp_convergence}
    \end{subfigure}
    \hfill
    \begin{subfigure}[t]{0.47\linewidth}
        \centering
        \includegraphics[width=\linewidth]{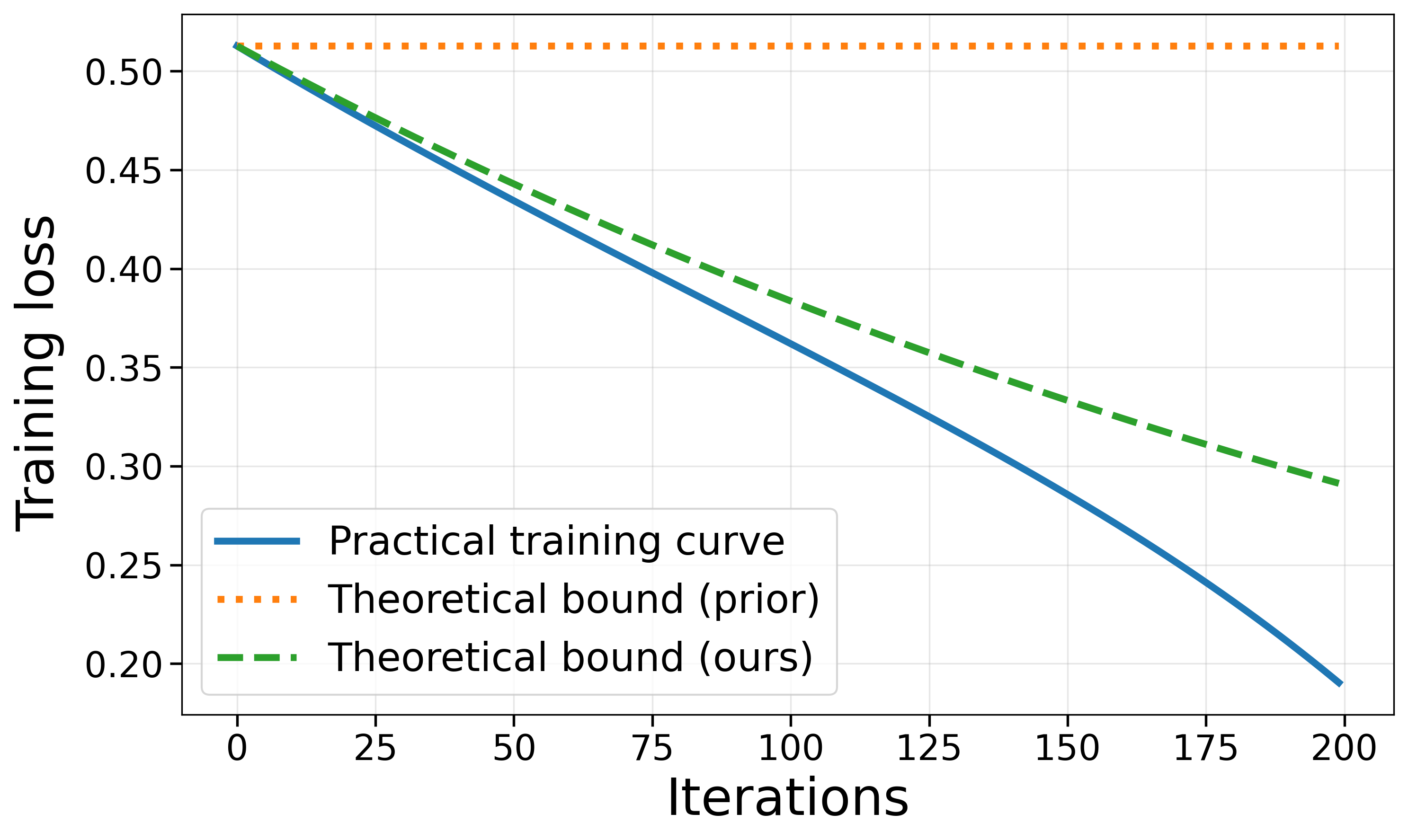}
        \caption{Fashion-MNIST, MLP}
        \label{fig:fashionmnist_mlp_convergence}
    \end{subfigure}

    \vspace{0.4em}

    \begin{subfigure}[t]{0.47\linewidth}
        \centering
        \includegraphics[width=\linewidth]{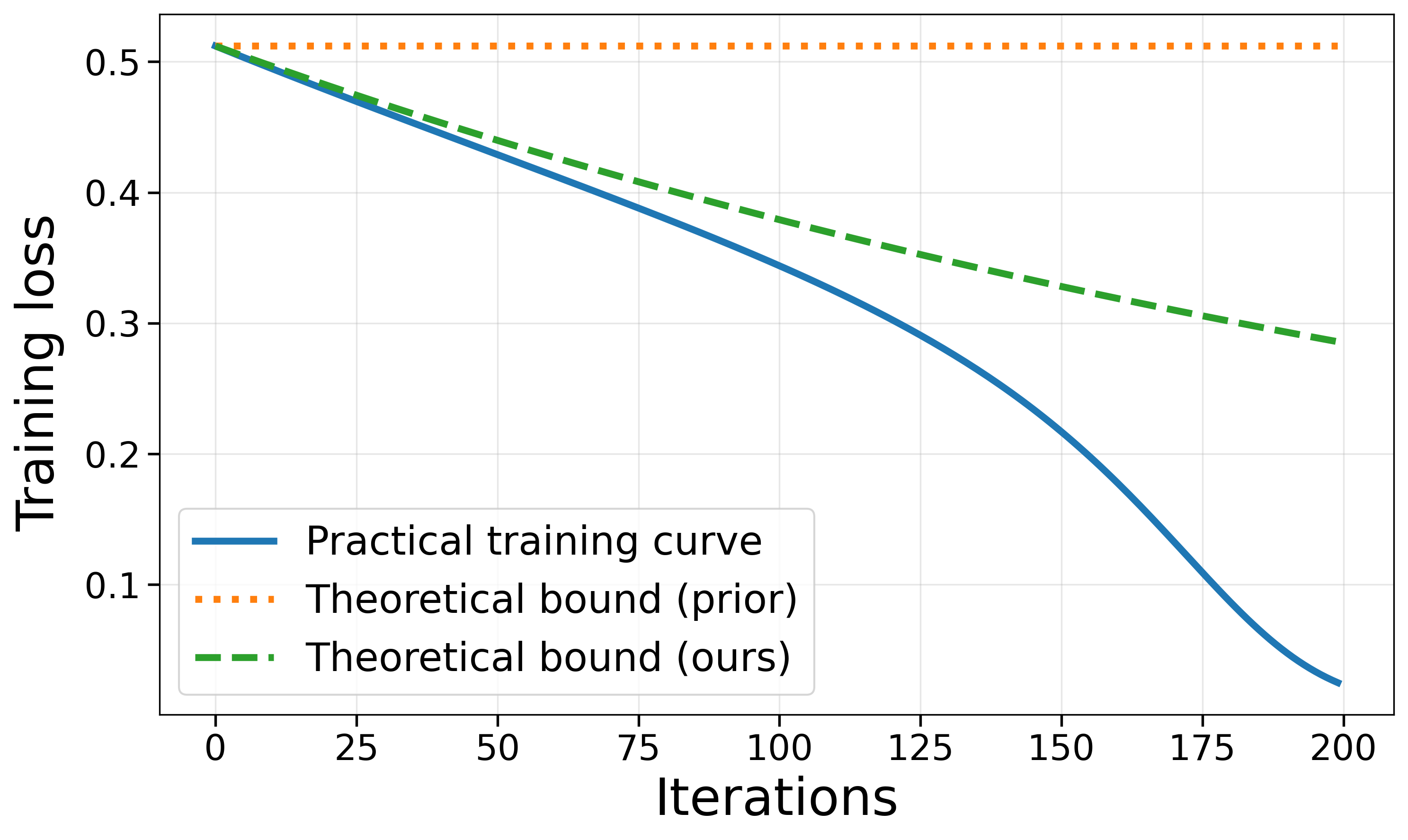}
        \caption{MNIST, MLP}
        \label{fig:mnist_mlp_convergence}
    \end{subfigure}

    \caption{Comparison of theoretical convergence bounds across architectures and datasets. The MLP experiments use a five-layer architecture on subsets of CIFAR-10, SVHN, Fashion-MNIST, and MNIST, while the CNN experiment is evaluated on Tiny-ImageNet10. Each training subset contains $1000$ samples.}
    \label{fig:mlp_curves}
\end{figure}

\section{Experimental Setup and Additional Plots}
\label{sec_app:additional_alignment_plots}

\subsection{Details of experimental setup}
\textbf{Architectures.} The same two finite-width architectures, deep MLP and CNN, were used throughout the NTK Alignment and convergence empirical studies in PyTorch. Deep MLP consists of five hidden layers, each of width 512, with ReLU activations after every hidden layer and a final linear output layer. The CNN consists of five $3 \times 3$ convolutional layers with channel sizes $32,64,128,128,$ and $256$, ReLU activations throughout, and max-pooling after the second, fourth, and fifth convolution blocks. This
is followed by adaptive $4 \times 4$ pooling and a three-layer classifier $4096 \rightarrow 512 \rightarrow 256 \rightarrow 10$. For all the experiments, SGD Optimizer with momentum as $0$, learning rate as $0.01$ are chosen consistently.

\textbf{NTK Alignment.} The numerical results for Label-NTK and Residual-NTK alignments are obtained by computing the eigen-spectrum of the  NTK together with target projections across eigen-modes at multiple time stamps. In particular, Label-dependent projections $(\rvv_i^\top \rvy)^2$ and residual-dependent projections $(\rvv_i^\top \rvr)^2$ are evaluated on batches of $100$ samples to avoid computation precision issue for small eigenvalues and we report their average over all batches over the whole datasets.  The experiments are conducted on MNIST, Fashion-MNIST, SVHN, and CIFAR-10 Datasets for MLP, and CIFAR-10 and Tiny-ImageNet-10  Datasets for CNN. 

\textbf{Convergence.} For the convergence experiments, full-batch GD was run with a learning rate of $0.01$, $200$ iterations, and a subset of $1000$ samples. The experiments are validated on MNIST, Fashion-MNIST, SVHN, and CIFAR-10 Datasets for MLP, and CIFAR-10 and Tiny-ImageNet-10 Datasets for CNN. 
For each setting, we use  the NTK $K$ at initialization, or its smallest eigenvalue $\lambda_{\mathrm{min}}$, to compute the theoretical convergence bounds.  

\subsection{Additional Plot for Convergence Rates}
Figure \ref{fig:mlp_curves} compares the convergence bound obtained from our theory with that of prior theories, in the setting of a five-layer MLP trained on the whole CIFAR-10 dataset. 

\subsection{Additional NTK Alignment Plots}
Figure \ref{fig:alignment_additional_plots} provides more evidence on Label-NTK alignment and Residual-NTK alignment across various architectures and datasets. 


\begin{figure}[t]
    \centering
    \begin{subfigure}[t]{0.45\linewidth}
        \centering
        \includegraphics[width=\linewidth]{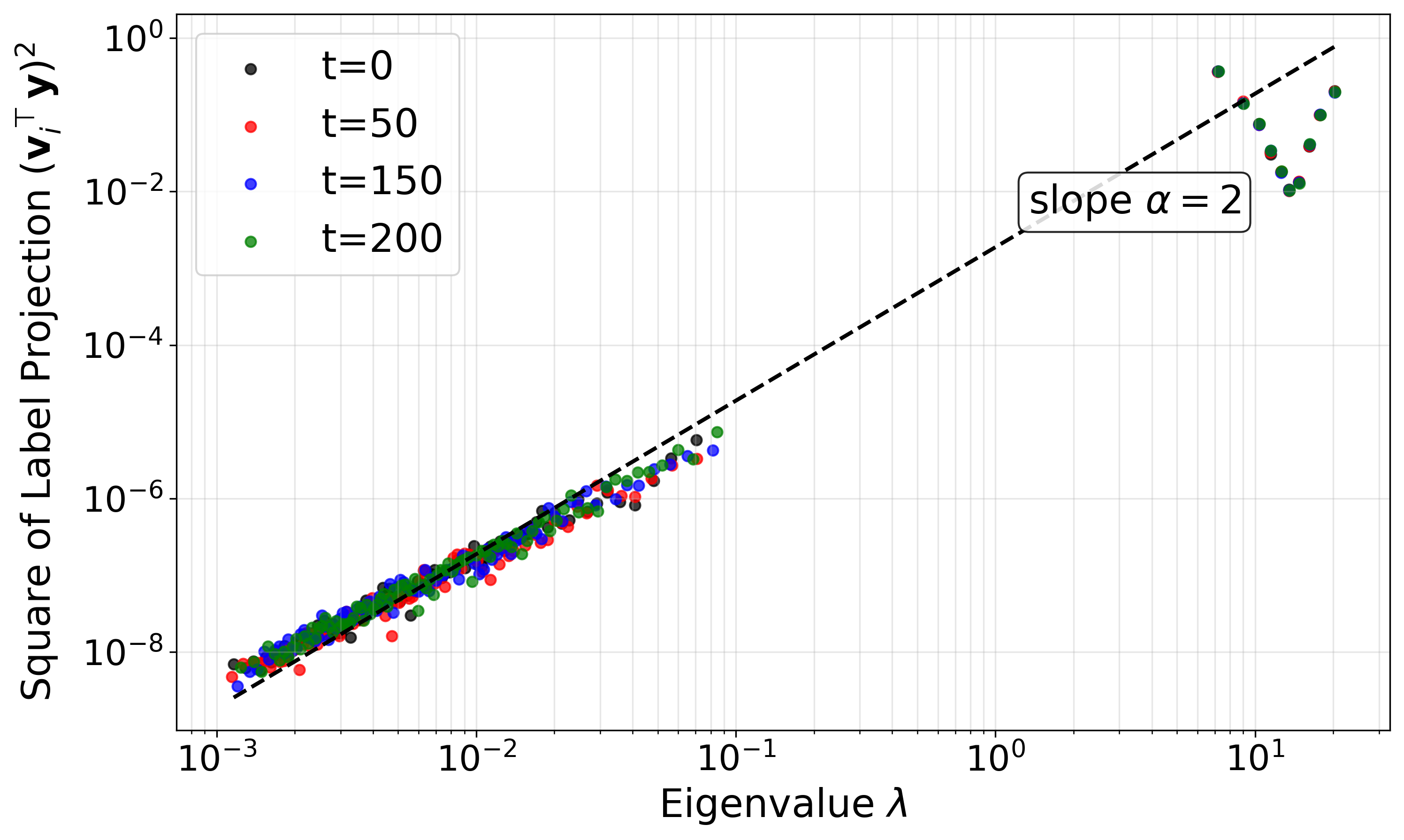}
        \vspace{-5pt}
        \label{fig:cnn_y_projections_tinyimagenet10}
        \caption{Label-NTK alignment(CNN,Tiny-ImageNet10)}
    \end{subfigure}
    \hfill
    \begin{subfigure}[t]{0.45\linewidth}
        \centering
        \includegraphics[width=\linewidth]{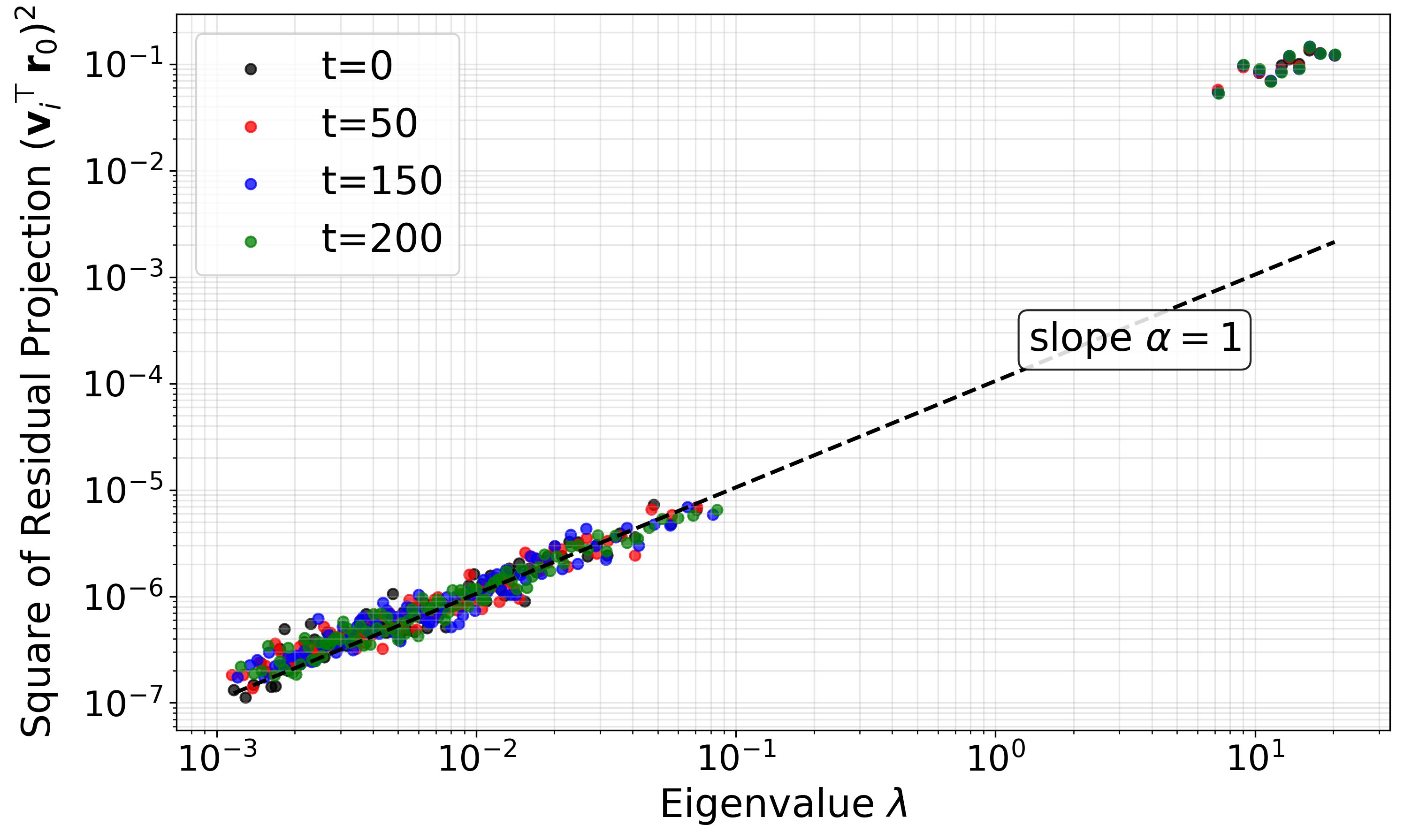}
        \vspace{-7pt}
        \label{fig:cnn_r0_projections_tinyimagenet10}
        \caption{Residual-NTK alignment(CNN,Tiny-ImageNet10)}
    \end{subfigure}
    \begin{subfigure}[t]{0.45\linewidth}
        \centering
        \includegraphics[width=\linewidth]{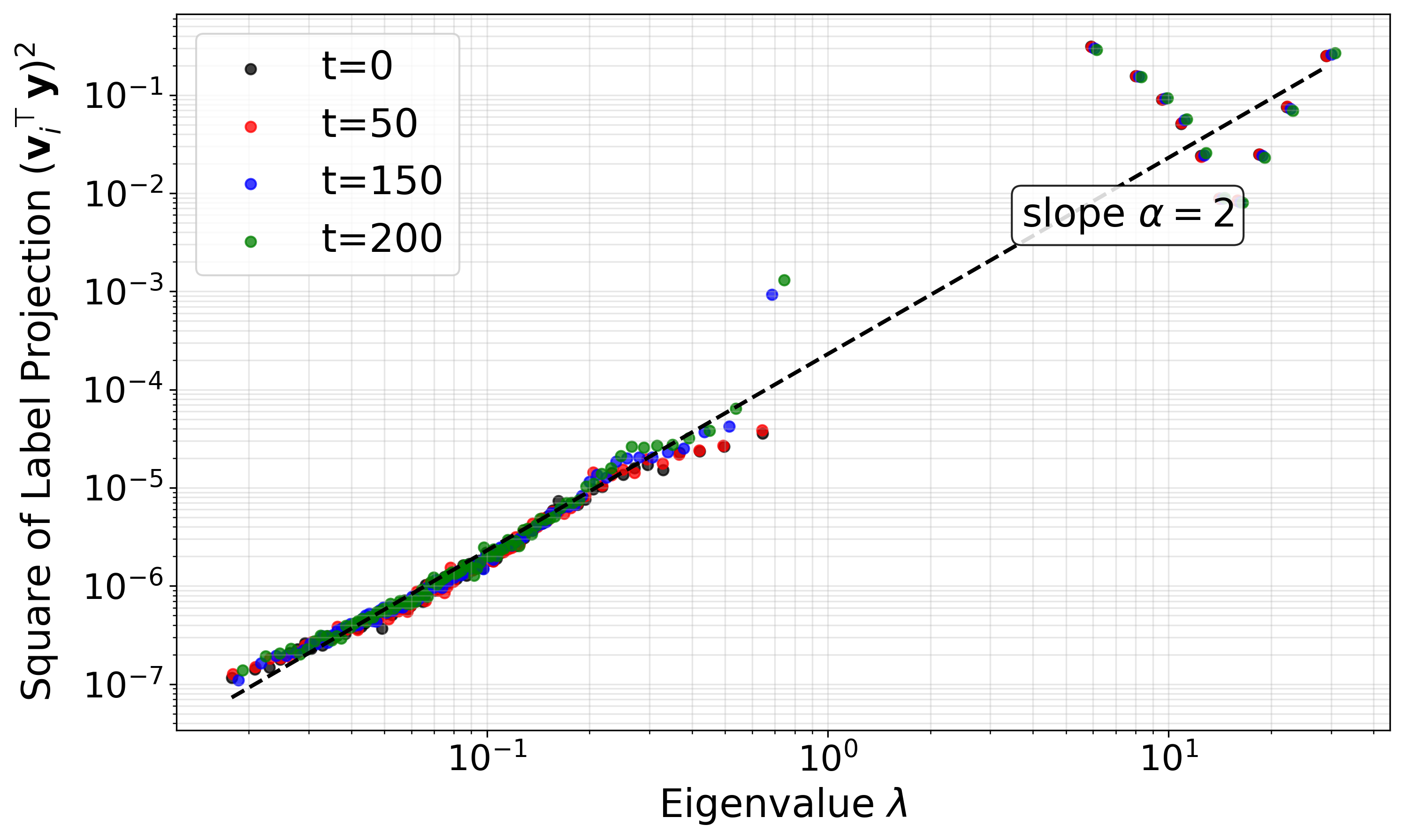}
        \vspace{-7pt}
        \label{fig:mlp_y_projections_svhn}
        \caption{Label-NTK alignment(MLP, SVHN)}
    \end{subfigure}
    \hfill
    \begin{subfigure}[t]{0.45\linewidth}
        \centering
        \includegraphics[width=\linewidth]{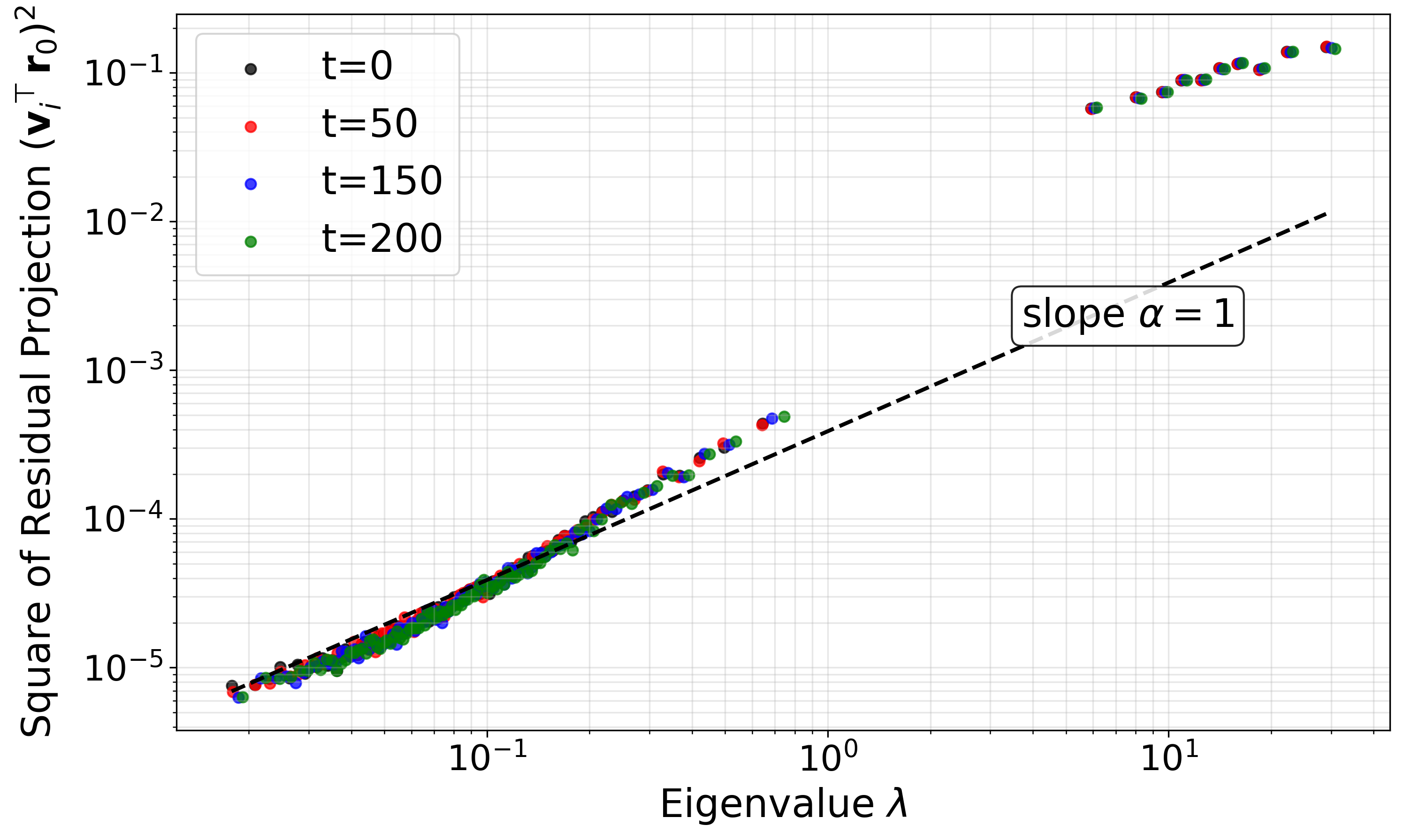}
        \vspace{-7pt}
        \label{fig:mlp_r0_projections_svhn}
        \caption{Residual-NTK alignment(MLP,SVHN)}
    \end{subfigure}
    \begin{subfigure}[t]{0.45\linewidth}
        \centering
        \includegraphics[width=\linewidth]{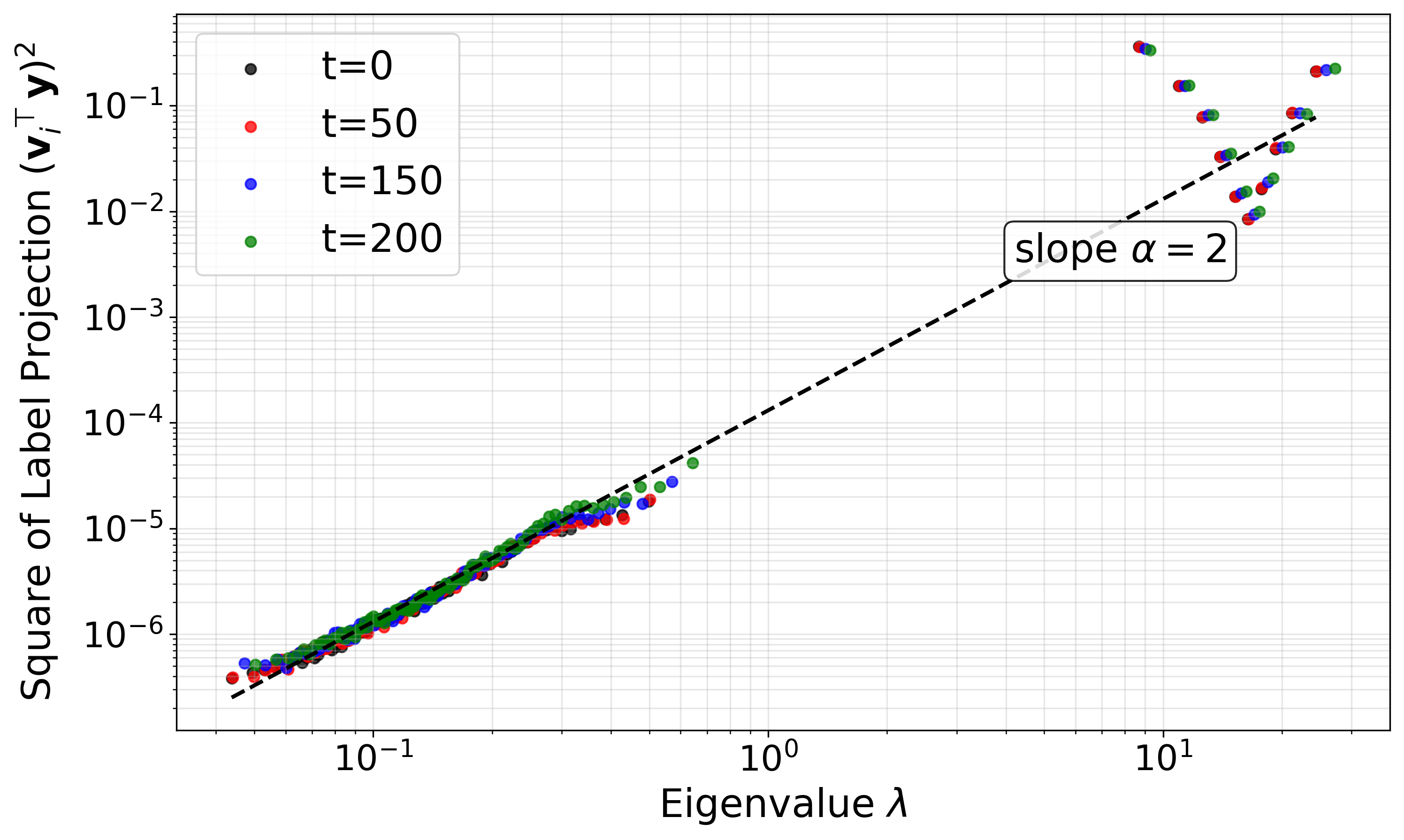}
        \vspace{-7pt}
        \label{fig:mlp_y_projections_fashionmnist}
        \caption{Label-NTK alignment(MLP, Fashion-MNIST)}
    \end{subfigure}
    \hfill
    \begin{subfigure}[t]{0.45\linewidth}
        \centering
        \includegraphics[width=\linewidth]{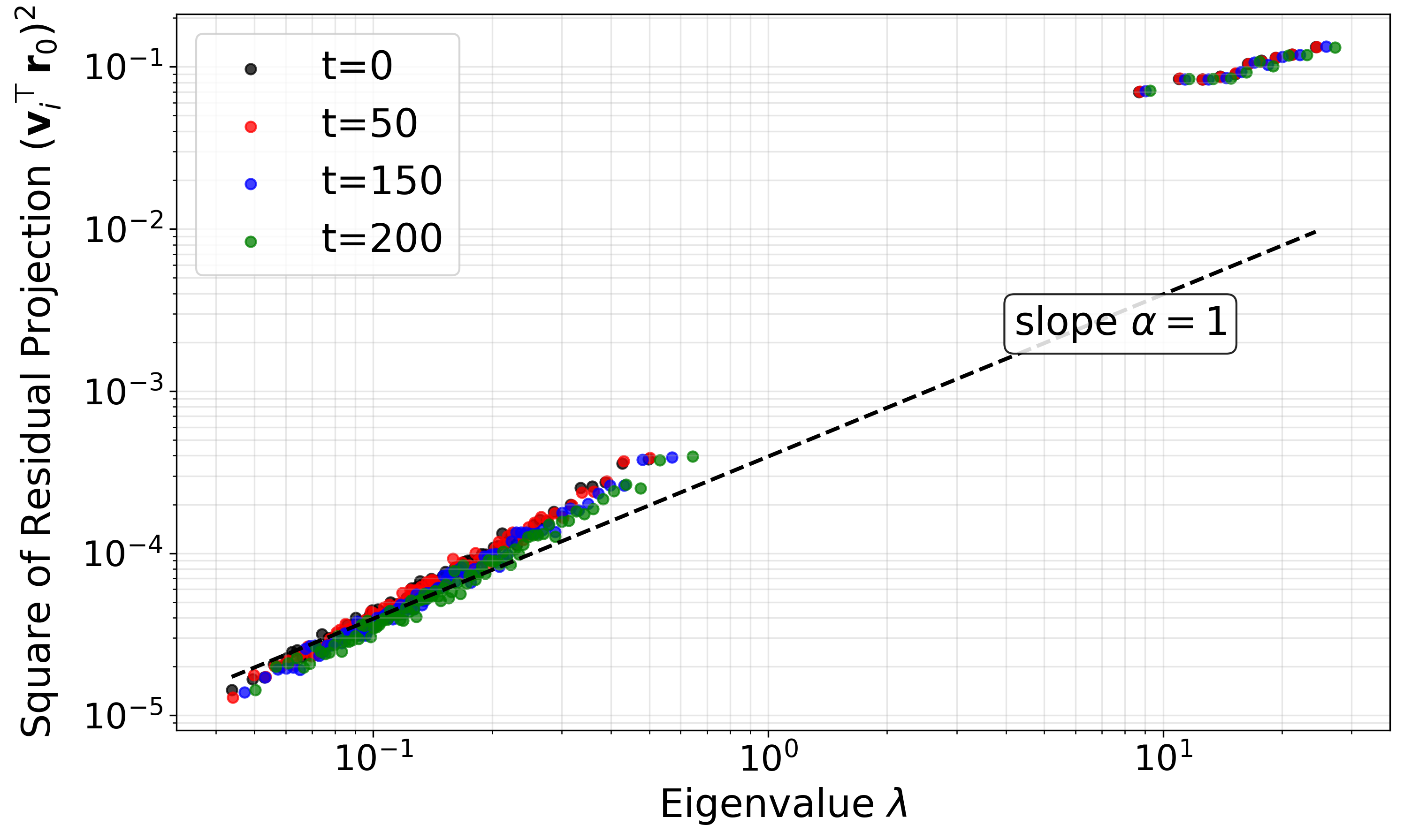}
        \vspace{-7pt}
        \label{fig:mlp_r0_projections_fashionmnist}
        \caption{Residual-NTK alignment(MLP,Fashion-MNIST)}
    \end{subfigure}
    \begin{subfigure}[t]{0.45\linewidth}
        \centering
        \includegraphics[width=\linewidth]{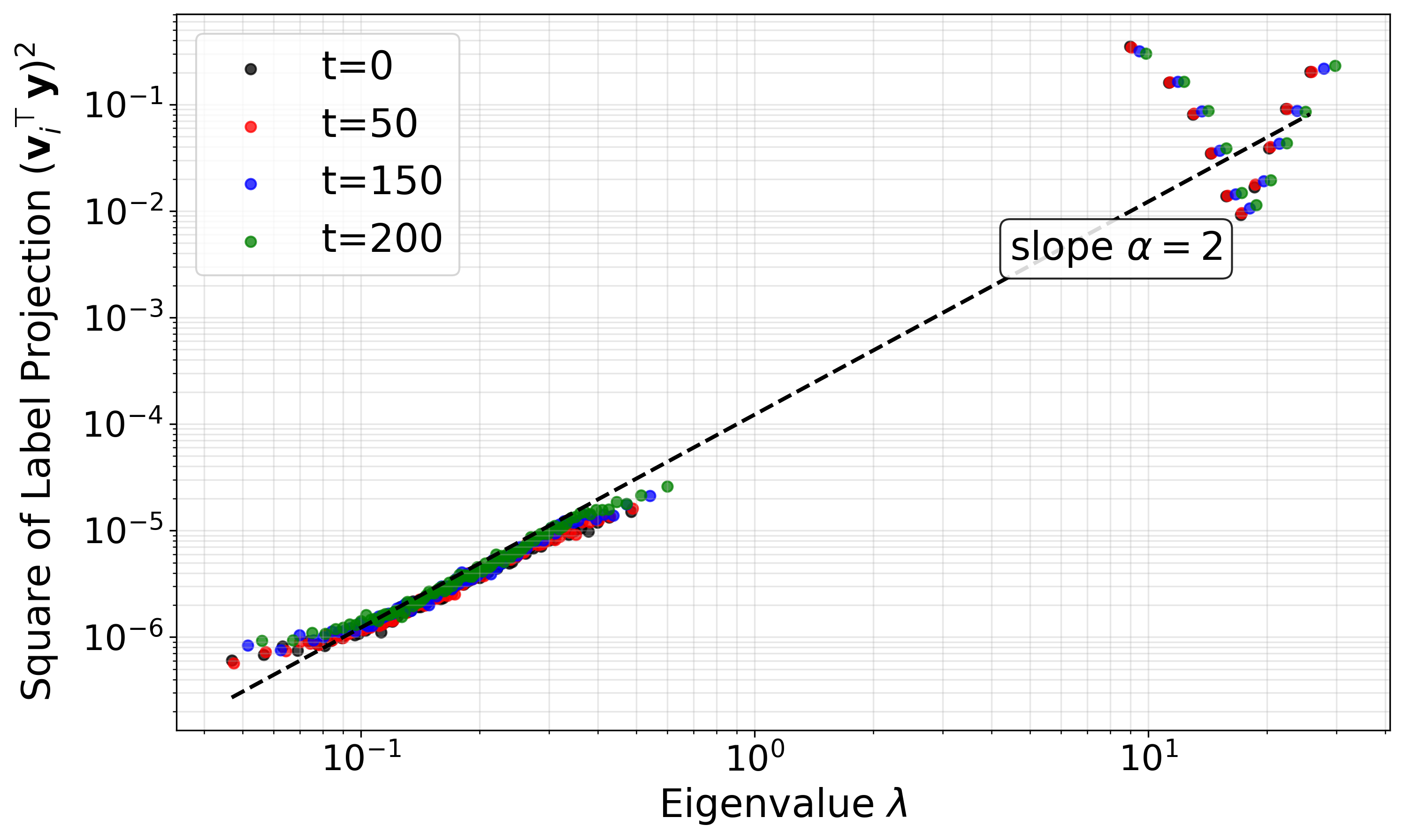}
        \vspace{-7pt}
        \label{fig:mlp_y_projections_mnist}
        \caption{Label-NTK alignment(MLP, MNIST)}
    \end{subfigure}
    \hfill
    \begin{subfigure}[t]{0.45\linewidth}
        \centering
        \includegraphics[width=\linewidth]{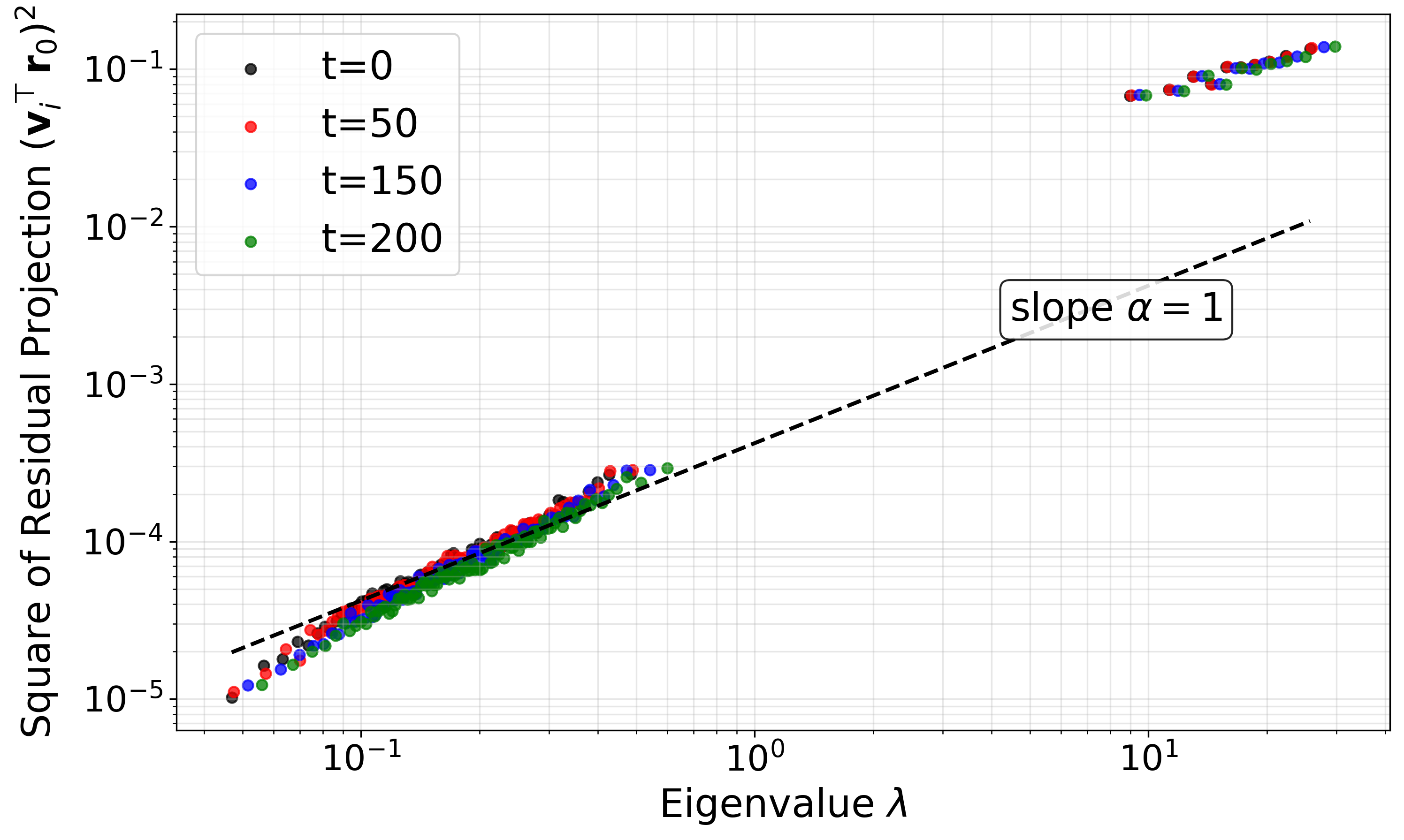}
        \vspace{-7pt}
        \label{fig:mlp_r0_projections_mnist}
        \caption{Residual-NTK alignment(MLP,MNIST)}
    \end{subfigure}
    \caption{{\it Label-NTK alignment} and {\it Residual-NTK alignment} for different architectures and datasets at multiple training time $t$. {\bf Left:} Label-NTK alignment. {\bf Right:} Residual-NTK alignment. {\bf First row:}  CNN  on the Tiny-ImageNet10;  {\bf Row 2-4:}  MLP  on SVHN, Fashion-MNIST, and MNIST respectively.  Each point is computed as an average over 500 batches, each containing 100 samples. Note: slope $\alpha=2$ (left) and $\alpha=1$ (right).}
    \label{fig:alignment_additional_plots}
\end{figure}


\clearpage

\end{document}